\def\eqref#1{equation~\ref{#1}}
\def\1{\bm{1}}
\definecolor{royalblue}{rgb}{0.06, 0.75, 0.99}
\newcommand{\eps}{\varepsilon}
 \newtheorem{theorem}{Theorem}
 \newtheorem{proposition}{Proposition}
\definecolor{mygray}{gray}{0.95}
\def\eqref#1{\ref{#1}}
\def\1{\bm{1}}
\DeclareMathAlphabet{\mathsfit}{\encodingdefault}{\sfdefault}{m}{sl}
\SetMathAlphabet{\mathsfit}{bold}{\encodingdefault}{\sfdefault}{bx}{n}
\newcommand{\E}{\mathbb{E}}
\newcommand{\R}{\mathbb{R}}
\newcommand{\N}{\mathbb{N}}
\definecolor{linkcolor}{RGB}{74, 102, 146}
\newtheorem{remark}[theorem]{Remark}
\renewcommand{\eqref}[1]{(\ref{#1})}
\title{NETS: A Non-Equilibrium Transport Sampler}
\author{Michael S. Albergo \\
Society of Fellows\\
Harvard University\\
Cambridge, MA 02138, USA \\
\texttt{malbergo@fas.harvard.edu} \\
\And
Eric Vanden-Eijnden   \\
Courant Institute of Mathematical Sciences \\
New York University \\
New York, NY 10012, USA \\
\texttt{eve2@cims.nyu.edu} \\
}
\begin{document}

\maketitle

\begin{abstract}
We propose an algorithm, termed the Non-Equilibrium Transport Sampler (NETS), to sample from unnormalized probability distributions. NETS can be viewed as a variant of annealed importance sampling (AIS) based on Jarzynski's equality, in which the stochastic differential equation used to perform the non-equilibrium sampling is augmented with an additional learned drift term that lowers the impact of the unbiasing weights used in AIS. We show that this drift is the minimizer of a variety of objective functions, which can all be estimated in an unbiased fashion without backpropagating through solutions of the stochastic differential equations governing the sampling. We also prove that some these objectives control the Kullback-Leibler divergence of the estimated distribution from its target. NETS is shown to be unbiased and, in addition, has a tunable diffusion coefficient which can be adjusted post-training to maximize the effective sample size. We demonstrate the efficacy of the method on standard benchmarks, high-dimensional Gaussian mixture distributions, and a model from statistical lattice field theory, for which it surpasses the performances of related work and existing baselines.
\end{abstract}

\section{Introduction}
\label{sec:intro}
The aim of this paper is to sample probability distributions supported on $\R^d$ and known only up to a normalization constant. This problem arises in a wide variety in applications, ranging from statistical physics \citep{faulkner2023sampling, wilson1974, henin2022} to Bayesian inference \citep{Neal1993Probabilistic}, and it is known to be challenging when the target distribution is not log-concave. In this situation, vanilla methods based on ergodic sampling using Markov Chain Monte Carlo (MCMC) or stochastic differential equations (SDE) such as the Langevin dynamics typically have very slow convergence rates, making them inefficient in practice. 

To overcome these difficulties, a variety of more sophisticated methods have been introduced, based e.g. on importance sampling. Here we will be interested in a class of methods of this type which involve non-equilibrium sampling, by which we mean algorithms that attempt to sample from a non-stationary probability distribution. The idea is to first generate samples from a simple base distribution (e.g. a normal distribution) them push them in finite time unto samples from the target. Traditionally, this aim has been achieved by combining some transport using e.g. the Langevin dynamics with reweighing, so as to remove the bias introduced by the non-equilibrium quench. Neal's Annealed Importance Sampling (AIS) \citep{neal2001}, Sequential Monte Carlo (SMC) methods \citep{delmoral1997, doucet2001},  or continuous-time variants thereof based on Jarzynski's equality \citep{hartmann2018jarzynski} are ways to implement this idea in practice. While these methods work better than ergodic sampling in many instances, they too can fail when the variance of the un-biasing weights become too large compared to their mean: this arises when the samples end up being too far from the target distribution at the end of the non-equilibrium quench. 

This problem suggests to modify the dynamics of the samples to help them evolves towards the target during the quench. Here we propose a way to achieve this by learning an \textit{additional drift} to include in the Langevin SDE. As we will show below, there exists a drift  that removes the need for the un-biasing weights altogether, and it is the minimizer of a variety of objective functions that are amenable to empirical estimation. In practice, this offers the possibility  to estimate this drift using deep learning methods. We exploit this idea here, showing that it results in an unbiased sampling strategy in which importance weights can still be used to correct the samples exactly, but the variance of these weights can be made much smaller due to the additional transport. 

To this end, our work makes the following \textbf{main contributions}:
\begin{itemize}[leftmargin=0.15in]
    \item We present a sampling algorithm which combines annealed Langevin dynamics with learnable additional transport. The algorithm, which we call the Non-Equilbrium Transport Sampler (NETS), is shown to be unbiased through a generalization of the Jarzysnki equality.
    \item We show that the  drift coefficient contributing this additional transport is the minimizer of  two separate objective functions, which can be learned without backpropagating through solutions of the SDE used for sampling.
    \item We show that one of these objectives, a physics informed neural network (PINN) loss, is also an \textit{off-policy} objective, meaning that it does not need samples from the target density. In addition, this objective controls the KL-divergence between the model and its target.
    \item The resultant samplers can be adapted \textit{after training} by tuning the integration time-step as well as the diffusivity to improve the sample quality, which we demonstrate on high-dimensional numerical experiments below.
\end{itemize}
\subsection{Related work}

\paragraph{Dynamical Measure Transport.} Most contemporary generative models for continuous data are built upon dynamical measure transport, in which samples from a base density are mapped to samples from a target density by means of solving ordinary or stochastic differential equations (ODE/SDE) whose drift coefficients are estimable. Initial attempts to do this started with \citep{chen2018, grathwohl2018scalable}. Recent work built upon these ideas by recasting the challenge of estimating the drift coefficients in these dynamical equations as a problem of quadratic regression, most notably with score-based diffusion models \citep{ho2020denoising, song2020score}, and since then with more general frameworks \citep{albergo2022building, lipman2022flow, albergo2023stochastic, liu2022flow, debortoli2021diffusion, neklyudov2023}. Importantly, these methods work because samples from the base and the target are readily available. Here, we show that analogous equations governing those systems can lead to objective functions that can be minimized with no initial data but still allow us to exploit the expressivity of models built on dynamical transport. 

\paragraph{Augmenting sampling with learning.}

Augmenting MCMC and importance sampling procedures with transport has been an active area of research for the past decade. Early work makes use of the independence Metropolis algorithm \citep{hastings1970, liu1996}, in which proposals  come from a transport map \citep{parno2018, noe2019,albergo2019, gabrie2022} that are accepted or rejected based off their likelihood ratio with the target. These methods were further improved by combining them with AIS and SMC perspectives, learning incremental maps that connect a sequence of interpolating densities between the base and target \citep{arbel2021, matthews2022, midgley2023flow}. Similar works in the high-energy physics community posit that interleaving stochastic updates within a sequence of maps can be seen as a form of non-equilbirium sampling \citep{caselle2022, bonanno2024}. 

Following the success of generative models built out of dynamical transport, there has been a surge of interest in applying these perspectives to sampling: \cite{vargas2023denoising, berner2024optimal} translate ideas from diffusion models to minimize the KL divergence between the model and the target, and \cite{zhang2021path} as well as \cite{behjoo2024harmonic} reformulate sampling as a stochastic optimal control (SOC) problem. These approaches require backpropagating through the solution of an SDE, which is too costly in high dimensions. In addition, the methods based on SOC must start with samples from a point mass, which may be far from the target. \cite{akhound2024} avoid the need to backpropagate through an SDE, but in the process introduce a bias into their objective function. An alternative perspective was also recently given in \citep{bruna2024posterior} by using denoising oracles to turn the original sampling problem into an easier one. A final line of work \citep{malkin2023gflow} shows how ideas used for modeling distributions on graphs can be repurposed as tools for sampling, including with off-policy training \citep{sendera2024improved}.

\cite{vargas2024transport} establish an unbiased sampler with added transport called Controlled Monte Carlo Diffusions (CMCD) that is similar to ours. The main difference is how we learn the drift.  In \cite{vargas2024transport} an objective for this drift in gradient form is derived through the use of path integrals and Girsanov's theorem. This objective either needs backpropagating through the SDE or has to be computed with a reference measure, and is done on a fixed grid. In practice, the latter can be numerically unstable. Here, through simple manipulations of Fokker-Planck equations, we propose a variety of new objective functions for the additional drift, none of which require backpropagating through the simulation. In addition, our learning can be done in a optimize-then-discretize fashion so that the sampling can be done with arbitrary step size and time-dependent diffusion after learning the model. This gives us an adaptive knob to increase performance, which we demonstrate below. One of the objectives we propose is a Physics-Informed Neural Network (PINN) loss that has recently appeared elsewhere in the literature for sampling \citep{mate2023learning, tian2024, fan2024pathguided}. Importantly, here we establish that this objective is valid in the context of annealed Langevin dynamics, and, moreover, that this PINN objective  \textit{directly controls} the KL divergence as well as the importance weights that emerge through the Jarzynski equality. 

\section{Methods} 
\label{sec:methods}

\subsection{Setup and notations} We assume that the target distribution is absolutely continuous with respect to the Lebesgue measure on $R^d$, with probability density function (PDF) $\rho_1(x) = Z_1^{-1} e^{-U_1(x)}$: here $x\in \R^d$, $U: \R^d\to \R$ is a known energy potential, assumed twice differentiable and bounded below, and  $Z_1= \int_{\R^d}e^{-U_1(x)} dx<\infty$ is an unknown normalization constant, referred to as the partition function is physics and the evidence in statistics.  Our aim is to generate samples from $\rho_1(x)$ so as to be able to estimate expectations with respect to this density. Additionally we wish to estimate $Z_1$.

To this end  we will use a series of time-dependent  potentials $U_t(x)$ which connects some simple $U_0(x)$ at $t=0$ (e.g. $U_0(x) = \frac12 |x|^2$) to  $U_1(x)$ at $t=1$. For example we could use  linear interpolation:
\begin{equation}
    \label{eq:lin:Ut}
    U_t(x) = (1-t) U_0(x) + t U_1(x),
\end{equation}
but other choices are possible as long as $U_{t=0} = U_0$, $U_{t=1}=U_1$, and $U_t(x)$ is twice differentiable in $(t,x)\in [0,1]\times \R^d$, which We explore below. We assume that  the time-dependent PDF associated with this potential $U_t(x)$ is normalizable for all $t\in[0,1]$ and denote it as 
\begin{equation}
    \label{eq:rhot}
    \rho_t(x) = Z^{-1}_t e^{-U_t(x)}, \qquad Z_t = \int_{\R^d} e^{-U_t(x)} dx<\infty,
\end{equation}
so that $\rho_{t=0}(x) = \rho_0(x)$ and $\rho_{t=1}(x) = \rho_1(x)$; we also assume that $\rho_0(x)$ is simple to sample (either directly or via MCMC or Langevin dynamics) and that its partition function $Z_0$ is known.
To simplify the notations we also introduce the free energy
\begin{equation}
    \label{eq:Ft}
    F_t = - \log Z_t.
\end{equation}
Since $\partial_t F_t = -\partial_t \log \int_{R^d} e^{-U_t(x)} dx =  \int_{R^d} \partial_t U_t(x) e^{-U_t(x)} dx / \int_{R^d} e^{-U_t(x)} dx $ we have the useful identity
\begin{equation}
\label{eq:dtF}
    \partial_t F_t = \int_{R^d} \partial_t U_t(x) \rho_t(x) dx .
\end{equation}

\subsection{Non-equilibrium sampling with importance weights}
\label{sec:ais}

Annealed importance sampling uses a finite sequence of MCMC moves that satisfy detailed-balance locally in time but not globally, thereby introducing a bias that can be corrected with weights. Here we present a time-continuous variant of AIS based on Jarzynski equality that will be more useful for our purpose. 

By definition of the PDF in~\eqref{eq:rhot}, $\nabla \rho_t(x) = - \nabla U_t(x) \rho_t(x)$ and hence, for any $\eps_t\ge 0$, we have
\begin{equation}
    \label{eq:fpe:0}
    0 = \eps_t \nabla \cdot (\nabla U_t \rho_t + \nabla \rho_t).
\end{equation}
Since we also have 
\begin{equation}
    \label{eq:dt:rho}
    \partial_t \rho_t = -(\partial_t U_t- \partial_t F_t) \rho_t,
\end{equation} we can combine these last two equations to deduce that
\begin{equation}
    \label{eq:fpe:00}
    \partial_t \rho_t = \eps_t \nabla \cdot (\nabla U_t \rho_t + \nabla \rho_t)  -(\partial_t U_t- \partial_t F_t) \rho_t.
\end{equation}
The effect of the last term at the right hand-side of this equation can be accounted for by using weights. To see how, notice that if we extend the phase space to $(x,a)\in \R^{d+1}$ and introduce the PDF $f_t(x,a)$ solution to the Fokker-Planck equation (FPE)
\begin{equation}
    \label{eq:fpe:f}
    \partial_t f_t = \eps_t \nabla \cdot (\nabla U_t f_t + \nabla f_t) + \partial_t U_t \partial_a f_t, \qquad f_{t=0}(x,a) = \delta(a) \rho_0(x),
\end{equation}
then a direct calculation using \eqref{eq:dtF} (for details see Appendix~\ref{app:proofs}) shows that
\begin{equation}
    \label{eq:rho:f}
    \rho_t(x) = \frac{\int_{\R}e^a f_t(x,a) da}{\int_{\R^{d+1}} e^a f_t(y,a) dady}. 
\end{equation}
Therefore we can use the solution to SDE associated with the FPE~\eqref{eq:fpe:f} in the extended space to estimate expectations with respect to $\rho_t(x)$:
\begin{restatable}[Jarzynski equality]{proposition}{jar}
\label{prop:jar}
	Let $(X_t,A_t)$ solve the coupled system of SDE/ODE
	\begin{alignat}{2}
 \label{eq:sde:X}
		dX_t &= -\eps_t \nabla U_t(X_t) dt + \sqrt{2 \eps_t} dW_t,  \qquad  && X_0 \sim \rho_0,\\
  \label{eq:sde:A}
		d A_t &= -\partial_t U_t(X_t) dt,  && A_0 = 0,
	\end{alignat}
	where $\eps_t\ge 0$ is a time-dependent diffusion coefficient and $W_t \in \mathbb R^d$ is the Wiener process. Then for all $t\in[0,1]$ and any test function $h:\R^d\to\R$, we have
	\begin{equation}
 \label{eq:expect:0}
		\int_{\mathbb R^d} h(x) \rho_t(x) dx = \frac{\E[ e^{A_t} h(X_t)]}{\E[e^{A_t}]}, \qquad Z_t/Z_0 = e^{-F_t+F_0} = \E[e^{A_t}],
	\end{equation}
	where the expectations are taken over the law of $(X_t, A_t)$.
\end{restatable}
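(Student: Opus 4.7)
The plan is to prove the proposition by identifying the joint law of $(X_t, A_t)$ with the density $f_t(x,a)$ introduced in \eqref{eq:fpe:f}, and then exploiting the extended-space identity \eqref{eq:rho:f} that the preceding discussion has already set up.

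First, I would argue that the density of $(X_t, A_t)$ solves \eqref{eq:fpe:f}. This is the standard It\^o--Kolmogorov correspondence: the drift of the joint process is $(-\eps_t \nabla U_t(x),\, -\partial_t U_t(x))$ and the diffusion tensor has only an $x$-block equal to $2\eps_t I_d$, so the forward equation reads $\partial_t f_t = \eps_t \nabla_x\!\cdot\!(\nabla U_t f_t + \nabla_x f_t) + \partial_t U_t\, \partial_a f_t$. The initial condition $f_0(x,a) = \delta(a)\rho_0(x)$ follows from $X_0 \sim \rho_0$ and $A_0 = 0$ almost surely.

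Next I would establish the identity \eqref{eq:rho:f} by analyzing the weighted marginal $\tilde\rho_t(x) := \int_\R e^{a} f_t(x,a)\,da$. Multiplying \eqref{eq:fpe:f} by $e^{a}$, integrating over $a$, and using integration by parts in $a$ (which gives $\int e^{a}\partial_a f_t\,da = -\int e^{a} f_t\,da = -\tilde\rho_t$ provided $f_t$ decays sufficiently as $|a|\to\infty$), I obtain the closed PDE
\begin{equation*}
\partial_t \tilde\rho_t = \eps_t \nabla\!\cdot\!(\nabla U_t \tilde\rho_t + \nabla \tilde\rho_t) - \partial_t U_t\, \tilde\rho_t.
\end{equation*}
A direct substitution shows that $\tilde\rho_t(x) = e^{F_0 - U_t(x)}$ satisfies this PDE: the divergence term vanishes because $\nabla \tilde\rho_t + \nabla U_t\, \tilde\rho_t = 0$, and the remaining term matches $\partial_t \tilde\rho_t = -\partial_t U_t\, \tilde\rho_t$. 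The initial condition $\tilde\rho_0(x) = \rho_0(x) = e^{F_0 - U_0(x)}$ matches as well, so by uniqueness for this linear parabolic equation this is the solution.

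Finally, integrating $\tilde\rho_t$ in $x$ yields $\int\tilde\rho_t(x)\,dx = e^{F_0 - F_t} = Z_t/Z_0$, which is exactly $\E[e^{A_t}] = \int\!\int e^{a} f_t(x,a)\,da\,dx$, giving the second claim. For the first claim, $\E[e^{A_t} h(X_t)] = \int h(x)\tilde\rho_t(x)\,dx = e^{F_0}\int h(x) e^{-U_t(x)}\,dx$, and dividing by $\E[e^{A_t}]$ recovers $\int h(x)\rho_t(x)\,dx$ via the explicit form \eqref{eq:rhot}. The main technical obstacle is justifying the integration by parts in $a$ and the uniqueness step, both of which need sufficient decay of $f_t$ in $a$; this follows from standard moment bounds on $A_t$ under the twice-differentiability and lower-boundedness assumptions on $U_t$ already imposed in the setup, and can be made rigorous by a routine localization argument.
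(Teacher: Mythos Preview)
Your proposal is correct and follows essentially the same route as the paper: identify the joint density $f_t$ of $(X_t,A_t)$ with the solution of the extended FPE~\eqref{eq:fpe:f}, form the weighted marginal $\int_\R e^a f_t\,da$ (the paper calls it $g_t$), derive its closed PDE via integration by parts in $a$, check that $e^{-U_t+F_0}$ solves it, and read off both claims. The only difference is that you add explicit remarks on the decay and uniqueness issues, which the paper leaves implicit.
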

 The proof of this proposition is given in Appendix~\ref{app:proofs} and it relies on the identity $\int_{\R^d} h(x) \rho_t(x) = \int_{\R^{+1d}}e^a h(x) f_t(x,a) dadx/\int_{\R^{d+1}}f_t(x,a) da dx$ which follows from~\eqref{eq:rho:f}. The second equation in~\eqref{eq:expect:0} for the free energy~$F_t$ is what is referred to as Jarzynski's equality, and was originally surmised in the context of non-equilibrium thermodynamics \citep{jarzynski1997}.

\begin{remark} We stress that it is key to use the weights $e^{A_t}$ in~\eqref{eq:expect:0} because \textbf{$\rho_t(x)$ is not the PDF of $X_t$ in general}. Indeed, if we denote by $\tilde \rho_t(x)$ the PDF of $X_t$, it satisfies the FPE
\begin{equation}
    \label{eq:fpe:x}
    \partial_t \tilde \rho_t = \eps_t \nabla \cdot (\nabla U_t \tilde\rho_t + \nabla \tilde\rho_t), \quad \tilde \rho_{t=0} = \rho.
\end{equation}
This FPE misses the term $-(\partial_t U_t- \partial_t F_t) \rho_t$ at the right hand-side of~\eqref{eq:fpe:00},  and as a result $\tilde \rho_t(x) \not= \rho_t(x)$ in general -- intuitively, $\tilde \rho_t(x)$ lags behind $\rho_t(x)$ when the potential $U_t(x)$ evolves and this lag is what the weights in \eqref{eq:expect:0} correct for. 
\end{remark}

It is  important to realize that, while the relation \eqref{eq:expect:0} can be used to compute unbiased estimators of expectations, this estimator on its own can be high variance if the lag between the PDF $\tilde \rho_t(x)$ of $X_t$ and $\rho_t(x)$ is too pronounced. This issue can be alleviated by using resampling methods as is done in sequential Monte Carlo~\citep{doucet2001}. Here we will solve it by adding some additional drift in~\eqref{eq:sde:X} that will compensate for this lag and reduce the effect of the weights.

\subsection{Nonequilibrium sampling with perfect additional transport}
\label{sec:perfect}

To see how we can add a transport term to eliminate the need of the weights, let us introduce a velocity field  $b_t(x)\in\R^d$ which at all times $t\in[0,1]$ satisfies 
\begin{equation}
    \label{eq:b}
    \nabla \cdot (b_t \rho_t) = -\partial_t \rho_t.
\end{equation} 
We stress that this is an equation for $b_t(x)$ in which $\rho_t(x)$ is fixed and given by~\eqref{eq:rhot}:
In Appendix~\ref{app:fk} we show how to express the solution to \eqref{eq:b} via Feynman-Kac formula.
If~\eqref{eq:b} is satisfied, then we can combine this equation with \eqref{eq:fpe:0} and \eqref{eq:dt:rho} to arrive at 
\begin{equation}
    \label{eq:fpe:bb}
    \partial_t \rho_t = \eps_t \nabla \cdot (\nabla U_t \rho_t + \nabla \rho_t)  - \nabla \cdot (b_t \rho_t),
\end{equation}
which is a standard FPE. Therefore the solution to the SDE associated with \eqref{eq:fpe:bb} allows us to sample $\rho_t(x)$ directly (without weights). We phrase this result as:

\begin{restatable}[Sampling with perfect additional transport.]{proposition}{perfect}
\label{prop:perfect:transp}
    Let $b_t(x)$ be a solution to~\eqref{eq:b} and let $X^b_t$ satisfy the SDE
	\begin{equation}
  \label{eq:sde:Xbe}
		dX^b_t = -\eps_t \nabla U_t(X^b_t) dt + b_t(X^b_t) dt + \sqrt{2 \eps_t} dW_t,  \qquad   X^b_0 \sim \rho_0, 
  \end{equation}
	where $\eps_t\ge 0$ is a time-dependent diffusion coefficient and $W_t \in \mathbb R^d$ is the Wiener process. Then  $\rho_t(x)$ is the PDF of $X^b_T$, i.e. for all $t\in[0,1]$ and, given any test function $h:\R^d\to\R$, we have 
	\begin{equation}
 \label{eq:expect:be}
		\int_{\mathbb R^d} h(x) \rho_t(x) dx = \mathbb E[h(X^b_t)],
	\end{equation}
	where the expectation at the right-hand side is taken over the law of $(X^b_t)$.
\end{restatable}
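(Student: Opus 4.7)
The plan is to show that the probability density of $X^b_t$ satisfies the same evolution equation as $\rho_t$, with the same initial data, and then appeal to uniqueness. Let $p_t(x)$ denote the PDF of $X^b_t$. Since $X^b_t$ solves the SDE \eqref{eq:sde:Xbe} with drift $-\eps_t\nabla U_t + b_t$ and diffusion $\sqrt{2\eps_t}$, the standard derivation of the forward Kolmogorov equation (apply It\^o's formula to a test function $\varphi\in C^\infty_c(\R^d)$, take expectations, and integrate by parts) gives
\begin{equation}
    \label{eq:pt:fpe}
    \partial_t p_t = -\nabla\cdot\bigl((-\eps_t\nabla U_t + b_t)p_t\bigr) + \eps_t\Delta p_t = \eps_t\nabla\cdot(\nabla U_t\, p_t + \nabla p_t) - \nabla\cdot(b_t p_t),
\end{equation}
with initial condition $p_0 = \rho_0$ since $X_0^b\sim \rho_0$.

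Next, I would recall that $\rho_t$ solves exactly the same PDE. This is already established in the excerpt: the identity \eqref{eq:fpe:0} holds for any $\eps_t\ge 0$ by direct computation from $\nabla\rho_t = -\nabla U_t\,\rho_t$, the identity \eqref{eq:dt:rho} follows from differentiating $\rho_t = e^{-U_t + F_t}$, and the assumption \eqref{eq:b} on $b_t$ converts the scalar term $-(\partial_tU_t-\partial_tF_t)\rho_t$ into the divergence $-\nabla\cdot(b_t\rho_t)$. Together these give precisely \eqref{eq:fpe:bb}, which matches \eqref{eq:pt:fpe}.

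Thus $p_t$ and $\rho_t$ are two probability densities satisfying the same linear parabolic Fokker--Planck equation with the same initial datum $\rho_0$. Under the smoothness hypotheses on $U_t$ assumed in Section 2.1 and sufficient regularity of $b_t$ (e.g. Lipschitz in $x$, locally uniformly in $t$), uniqueness for this FPE yields $p_t(x) = \rho_t(x)$ for all $t\in[0,1]$. Integrating against an arbitrary test function $h$ immediately produces \eqref{eq:expect:be}.

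The main obstacle is the uniqueness step, which is genuinely an analytic hypothesis on the learned $b_t$: without some regularity (Lipschitz, or at least the conditions of the Figalli--Trevisan superposition/uniqueness theory for weak solutions with locally bounded coefficients), two distinct probability solutions to the same FPE could in principle coexist. In practice, I would state this as a regularity assumption on $b_t$ and flag it as the only nontrivial ingredient; the rest of the argument reduces to matching \eqref{eq:pt:fpe} with \eqref{eq:fpe:bb}, which follows by direct substitution using \eqref{eq:b}. The substantive content of the proposition lies not in this verification but in the existence of a $b_t$ satisfying \eqref{eq:b}, which is a separate matter deferred to Appendix~\ref{app:fk}.
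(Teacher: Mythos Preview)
Your proposal is correct and follows essentially the same approach as the paper: the paper's proof simply notes that when $b_t$ satisfies~\eqref{eq:b}, $\rho_t$ solves the FPE~\eqref{eq:fpe:bb}, which is precisely the forward equation associated with the SDE~\eqref{eq:sde:Xbe}, so~\eqref{eq:expect:be} follows. Your version is more detailed (you spell out the derivation of the FPE for $p_t$ and explicitly flag the uniqueness hypothesis), but the logical structure is identical.
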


This proposition is proven in Appendix~\ref{app:proofs} and it shows that we can in principle get rid of the weights altogether by adding the drift $b_t(x)$ in the Langevin SDE. This possibility was first noted in \cite{vaikuntanathan2008} and is also exploited in~\cite{tian2024} for deterministic dynamics (i.e. setting $\eps_t=0$ in~\eqref{eq:sde:Xbe}) and in~\cite{vargas2024transport} using the SDE~\eqref{eq:sde:Xbe}. Of course, in practice we need to estimate this drift, and also correct for sampling errors if this drift is imperfectly learned. Let us  discuss this second question first, and defer the derivation of objectives to learn $b_t(x)$ to Secs.~\ref{sec:bt:est} and~\ref{sec:phit:est}. In Appendix~\ref{app:fk} we show how to express the solution to \eqref{eq:b} via Feynman-Kac formula.

\subsection{Non-Equilibrium Transport Sampler}
\label{sec:nets}

Let us now show that we can combine the approaches discussed in Secs.~\ref{sec:ais} and \ref{sec:perfect} to design samplers in which we use an added transport, possibly imperfect, and importance weights.

To this end, suppose that we wish to add an additional transport term $-\nabla \cdot (\hat b_t \rho_t)$ in~\eqref{eq:fpe:00}, where $\hat b_t(x)\in\R^d$ is some given velocity that does not necessarily solve~\eqref{eq:b}. Using the expression in \eqref{eq:rhot} for $\rho_t(x)$, we  have the identity 
\begin{equation}
    \label{eq:b:rho}
 -\nabla \cdot (\hat b_t \rho_t) = -\nabla \cdot \hat b_t \rho_t + \nabla U_t\cdot \hat b_t \rho_t
\end{equation}  
Therefore we can rewrite \eqref{eq:fpe:0} equivalently as
\begin{equation}
    \label{eq:fpe:b}
    \partial_t \rho_t = \eps_t \nabla \cdot (\nabla U_t \rho_t + \nabla \rho_t) -\nabla \cdot (\hat b_t \rho_t ) + (\nabla \cdot \hat b_t- \nabla U_t -\partial_t U_t+ \partial_t F_t) \rho_t.
\end{equation}
We can now proceed as we did with \eqref{eq:fpe:0} and extend state space to account for the effect of the terms $(\nabla \cdot b_t- \nabla U_t -\partial_t U_t+ \partial_t F_t) \rho_t$ in this equation via weights, while having the term $-\nabla \cdot (\hat b_t(x) \rho_t(x))$ contribute to some additional transport. This leads us to a result originally obtained in~\cite{vaikuntanathan2008} and recently re-derived in~\cite{vargas2024transport}:
\begin{restatable}[Nonequilibrium Transport Sampler (NETS)]{proposition}{nets}
\label{prop:nets}
Let $(X^{\hat b}_t,A^{\hat b}_t)$ solve the coupled system of SDE/ODE
	\begin{alignat}{2}
 \label{eq:sde:Xb}
		dX^{\hat b}_t &= -\eps_t \nabla U_t(X^{\hat b}_t ) dt + \hat b_t(X^{\hat b}_t)dt + \sqrt{2 \eps_t} dW_t,  \qquad  && X^{\hat b}_0 \sim \rho_0, \\
  \label{eq:sde:Ab}
		d A^{\hat b}_t &= \nabla \cdot \hat b_t(X^{\hat b}_t )dt- \nabla U_t(X^{\hat b}_t ) \cdot \hat b_t(X^{\hat b}_t) dt -\partial_t U_t(X^{\hat b}_t)dt,  \qquad && A^{\hat b}_0 = 0,
	\end{alignat}
	where $\eps_t\ge 0$ is a time-dependent diffusion coefficient and $W_t \in \mathbb R^d$ is the Wiener process. Then for all $t\in[0,1]$ and any test function $h:\R^d\to\R$, we have 
	\begin{equation}
 \label{eq:expect:b}
		\int_{\mathbb R^d} h(x) \rho_t(x) dx = \frac{\mathbb E[ e^{A^{\hat b}_t} h(X^{\hat b}_t)]}{\mathbb E[e^{A^{\hat b}_t}]}, \qquad Z_t/Z_0 = e^{-F_t+F_0} = \E[e^{A^{\hat b}_t}],
	\end{equation}
	where the  expectations are taken over the law of $(X^{\hat b}_t, A^{\hat b}_t)$.
\end{restatable}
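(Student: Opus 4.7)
The plan is to mirror the proof of Proposition~\ref{prop:jar} (Jarzynski), but with the state-space-extended Fokker-Planck equation adjusted to account for the added drift $\hat b_t$ in the $X$-equation and the matching corrections in the $A$-equation. Concretely, I would introduce the joint PDF $f^{\hat b}_t(x,a)$ of $(X^{\hat b}_t, A^{\hat b}_t)$, write down its FPE from \eqref{eq:sde:Xb}--\eqref{eq:sde:Ab}, and then show that the marginal
\begin{equation}
    g_t(x) := \int_{\R} e^a f^{\hat b}_t(x,a)\, da
\end{equation}
is proportional to $\rho_t(x)$, with the proportionality constant being precisely $Z_t/Z_0$.

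The first step is thus to write the FPE for $f^{\hat b}_t$. Because the drift in $A^{\hat b}_t$ depends only on $(t,x)$ (not on $a$) and is deterministic, Ito's formula gives
\begin{equation}
    \partial_t f^{\hat b}_t = \eps_t \nabla \cdot (\nabla U_t f^{\hat b}_t + \nabla f^{\hat b}_t) - \nabla \cdot(\hat b_t f^{\hat b}_t) - (\nabla \cdot \hat b_t - \nabla U_t \cdot \hat b_t - \partial_t U_t)\, \partial_a f^{\hat b}_t,
\end{equation}
with $f^{\hat b}_{t=0}(x,a) = \delta(a)\rho_0(x)$. Multiplying by $e^a$, integrating in $a$, and using the boundary condition $e^a f^{\hat b}_t \to 0$ as $a\to\pm\infty$ so that $\int e^a \partial_a f^{\hat b}_t\, da = -g_t$, I obtain the closed PDE
\begin{equation}
    \partial_t g_t = \eps_t \nabla \cdot(\nabla U_t g_t + \nabla g_t) - \nabla \cdot(\hat b_t g_t) + (\nabla \cdot \hat b_t - \nabla U_t \cdot \hat b_t - \partial_t U_t)\, g_t.
\end{equation}

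The core of the argument is then to verify the ansatz $g_t(x) = C_t\, \rho_t(x)$. Substituting and using \eqref{eq:fpe:0} to kill the $\eps_t$-term, the identity \eqref{eq:b:rho} to rewrite $-\nabla \cdot(\hat b_t\rho_t) = (-\nabla \cdot \hat b_t + \nabla U_t \cdot \hat b_t)\rho_t$, and \eqref{eq:dt:rho} for $\partial_t \rho_t$, one finds massive cancellation: the $\nabla \cdot \hat b_t$ and $\nabla U_t \cdot \hat b_t$ terms kill each other, leaving the scalar ODE $\dot C_t = -C_t\,\partial_t F_t$. Together with the initial data $g_0(x) = \rho_0(x)$, hence $C_0 = 1$, this gives $C_t = e^{-F_t + F_0} = Z_t/Z_0$ and therefore
\begin{equation}
    \rho_t(x) = \frac{\int_{\R} e^a f^{\hat b}_t(x,a)\, da}{\int_{\R^{d+1}} e^a f^{\hat b}_t(y,a)\, da\, dy}.
\end{equation}

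With this identity in hand, the conclusions follow immediately: for any test function $h$,
\begin{equation}
    \E[e^{A^{\hat b}_t} h(X^{\hat b}_t)] = \int_{\R^{d+1}} e^a h(x) f^{\hat b}_t(x,a)\, da\, dx = \frac{Z_t}{Z_0}\int_{\R^d} h(x)\rho_t(x)\, dx,
\end{equation}
which gives both the normalization $\E[e^{A^{\hat b}_t}] = Z_t/Z_0$ (by taking $h \equiv 1$) and the ratio formula~\eqref{eq:expect:b}. I expect the main obstacle to be purely bookkeeping: tracking the signs and making clear that the $\hat b_t$-contributions to the weight ODE \eqref{eq:sde:Ab} were designed exactly so that the $g_t$-equation loses all dependence on $\hat b_t$ and reduces to the same scalar ODE as in the unaugmented Jarzynski proof. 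Everything else is direct integration by parts and substitution, and under the differentiability/integrability assumptions already imposed on $U_t$ and $\rho_t$ the boundary terms at infinity vanish without additional work.
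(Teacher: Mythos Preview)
Your proposal is correct and follows essentially the same route as the paper's own proof: introduce the joint density $f^{\hat b}_t(x,a)$, derive its FPE, integrate against $e^a$ to get a closed PDE for $g_t$, and verify that $g_t(x)=e^{-U_t(x)+F_0}$ (equivalently, your $C_t\rho_t$ with $C_t=e^{-F_t+F_0}$) solves it. The only cosmetic difference is that the paper checks the explicit formula for $g_t$ directly rather than via the ansatz-plus-ODE for $C_t$, but the computations are identical.
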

A simple proof of this proposition using simple manipulations of the FPE is given in Appendix~\ref{app:proofs}, which will allow us to write down a variety of new loss functions for learning $\hat b_t$; for an alternative proof using Girsanov theorem, see~\cite{vargas2024transport}.
For completeness, in Appendix~\ref{app:discrete}  we also give a time-discretized version of Proposition~\ref{prop:nets}, and in Section~\ref{sec:ext:gen} we generalize it in two ways: to include inertia and to turn $t$ into a vector coordinate for multimarginal sampling. We also discuss the connection between NETS and the method of~\cite{vargas2024transport} in Appendix~\ref{sec:vargas}.

Notice that, if $\hat b_t(x)=0$, the equations in Proposition~\ref{prop:nets} simply reduce to those in Proposition~\ref{prop:jar}, whereas if  $\hat b_t(x)=b_t(x)$ solves \eqref{eq:b} we can show  that
\begin{equation}
    \label{eq:A:b:e}
    A^b_t = -F_t + F_0,
\end{equation}
i.e. the weights have zero variance and give the free energy difference.  Indeed, by expanding both sides of \eqref{eq:b} and dividing them by $\rho_t(x)>0$, this equation can equivalently be written as
\begin{equation}
    \label{eq:b:2}
    \nabla \cdot b_t - \nabla U_t \cdot b_t  = \partial_t U_t - \partial_t F_t.
\end{equation} 
As a result, when $\hat b_t(x)=b_t(x)$, \eqref{eq:sde:Ab} reduces to
\begin{equation}
    \label{eq:sde:A:b:e}
    d A^{b}_t = -\partial_t F_t dt,  \qquad  A^{b}_0 = 0,
\end{equation}
and the solution to this equation is~\eqref{eq:A:b:e}. In practice, achieving zero variance of the weights by estimating $b_t(x)$ exactly is not generally possible, but having a good approximation of $b_t(x)$ can help reducing this variance dramatically, as we will illustrate below via experiments.

\subsection{Estimating the drift $b_t(x)$ via a PINN objective}
\label{sec:bt:est}

Equation \eqref{eq:b:2} can be used to derive an objective for both $b_t(x)$ and $F_t$. The reason is that in this equation the unknown $\partial_t F_t$ can be viewed as factor that guarantees solvability: indeed, integrating both sides of \eqref{eq:b} gives $0 = -\partial_t\int_{\R^d} U_t(x) \rho_t(x) dx + \partial_t F_t$, which, by \eqref{eq:dtF},  is satisfied if and only if $F_t$ is (up to a constant fixed by $F_0=-\log Z_0$) the exact free energy~\eqref{eq:Ft}. This offers the possibility to learn both $b_t(x)$ and $F_t$ variationally using  an  objective fitting the framework of physics informed neural networks (PINNs):

\begin{restatable}[PINN objective]{proposition}{pinn}
    \label{prop:pinn}
    Given any $T\in(0,1]$ and any PDF $\hat \rho_t(x)>0$ consider the objective for $(\hat b, \hat F)$ given by:
    \begin{equation}
        \label{eq:loss:b:F}
        L^T_{\text{PINN}} [\hat b,\hat F] = \int_0^T \int_{\mathbb R^d} \big| \nabla \cdot \hat b_t(x) - \nabla U_t(x) \cdot \hat b_t(x)  - \partial_t U_t(x) + \partial_t \hat F_t \big|^2 \hat \rho_t(x) dx dt.
    \end{equation}
    Then $\min_{\hat b,\hat F} L^T_{\text{PINN}}[\hat b,\hat F] =0$, and all minimizers $(b,F)$ are such that  and $b_t(x)$ solves \eqref{eq:b:2} and $F_t$ is the free energy \eqref{eq:Ft} for all $t\in [0,T]$. 
\end{restatable}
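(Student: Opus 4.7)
The plan is to prove the three assertions in sequence: nonnegativity of $L^T_{\text{PINN}}$, attainability of zero, and the converse characterization of all minimizers. Nonnegativity is immediate because the integrand is a pointwise squared quantity integrated against the positive measure $\hat\rho_t(x)\,dx\,dt$. To exhibit a minimizer with value $0$, I would take the pair $(\hat b_t,\hat F_t)=(b_t,F_t)$, where $b_t$ is any solution of~\eqref{eq:b:2} and $F_t$ is the true free energy~\eqref{eq:Ft}; then by the very definition of~\eqref{eq:b:2} the integrand vanishes pointwise in $(t,x)$, so the loss is zero.

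The substance of the argument is the converse. If $L^T_{\text{PINN}}[\hat b,\hat F]=0$, then because the integrand is $\ge 0$ and $\hat\rho_t(x)>0$ everywhere on $[0,T]\times\R^d$, the pointwise identity
\begin{equation*}
\nabla\cdot\hat b_t(x)-\nabla U_t(x)\cdot\hat b_t(x)-\partial_t U_t(x)+\partial_t \hat F_t=0
\end{equation*}
holds for a.e.\ $(t,x)$. This already has the structural form of~\eqref{eq:b:2}, with the one discrepancy that the time-dependent constant $\partial_t\hat F_t$ appears in place of $\partial_t F_t$. To force $\partial_t\hat F_t=\partial_t F_t$ I would integrate the identity against the true density $\rho_t(x)$. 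Since $\partial_t\hat F_t$ does not depend on $x$ and $\int\rho_t\,dx=1$, this reduces to evaluating two spatial integrals.

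The first integral, $\int(\nabla\cdot\hat b_t-\nabla U_t\cdot\hat b_t)\rho_t\,dx$, vanishes after a single integration by parts: using $\nabla\rho_t=-\nabla U_t\,\rho_t$, one has $\int\nabla\cdot\hat b_t\,\rho_t\,dx=-\int\hat b_t\cdot\nabla\rho_t\,dx=\int\nabla U_t\cdot\hat b_t\,\rho_t\,dx$, which cancels the second term. The second integral, $\int\partial_t U_t\,\rho_t\,dx$, equals $\partial_t F_t$ by the identity~\eqref{eq:dtF} established in the setup. Together these yield $\partial_t\hat F_t=\partial_t F_t$, and, using the normalization $\hat F_0=-\log Z_0=F_0$ built into the parametrization of $\hat F$, this integrates to $\hat F_t=F_t$ on $[0,T]$. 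Substituting back into the pointwise identity recovers exactly~\eqref{eq:b:2}, which is what had to be shown.

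The main obstacle is the standing regularity/decay hypothesis needed to justify the integration by parts (i.e., that $\hat b_t\rho_t$ decays sufficiently fast at infinity that boundary terms vanish); this is a mild condition consistent with the assumptions already in force for the Jarzynski-style Propositions~\ref{prop:jar} and~\ref{prop:perfect:transp}, and can be absorbed into the admissible class over which the minimization is carried out. Everything else is bookkeeping.
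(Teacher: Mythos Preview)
Your proof is correct and follows essentially the same route as the paper: both argue that a zero of the nonnegative loss forces the pointwise identity, then integrate this identity against the true density $\rho_t$ (equivalently, multiply by $\rho_t$ and integrate) so that the divergence term drops out and~\eqref{eq:dtF} pins $\partial_t\hat F_t=\partial_t F_t$, whence the pointwise identity becomes~\eqref{eq:b:2}. Your write-up is slightly more explicit than the paper's in exhibiting a zero minimizer and in flagging the decay hypothesis needed for the integration by parts, but the substance is identical.
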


This result is proven in Appendix~\ref{app:proofs}: in practice, we will use $T\in(0,1]$ for annealing but ultimately we are interested in the result when $T=1$. Note that since the expectation over an arbitrary $\hat \rho_t(x)$ in~\eqref{eq:loss:b:F}, it can be used as an off-policy objective. It is however natural to use $\hat \rho_t(x) = \rho_t(x)$ since it allows us to put statistical weight in the objective precisely in the regions where we need $b_t(x)$ to transport probability mass. In either case, there is no need to backpropagate through simulation of the SDE used to produce data. We show how the expectation over $\rho_t(x)$ can be estimated without bias in Appendices \ref{sec:imp} and \ref{sec:hutch} to arrive at an empirical estimator for~\eqref{eq:loss:b:F} when $\hat \rho_t(x) = \rho_t(x)$. Note also that, while minimization of the objective~\eqref{eq:loss:b:F} gives an estimate $\hat F_t$ of the free energy, it is not needed at sampling time when solving \eqref{eq:sde:Xb}. An objective similar to \eqref{eq:loss:b:F} was also recently posited in \cite{mate2023learning, tian2024} for use with deterministic flows. Here, we devise it in the context of augmenting annealed Langevin dynamics.

One advantage of the PINN objective \eqref{eq:loss:b:F} is that we know that its minimum is zero, and hence we can track its value to monitor convergence when minimizing  \eqref{eq:loss:b:F}  by gradient descent, as we do below. Another advantage of the loss \eqref{eq:loss:b:F} is that it controls the quality of the transport as measured by the Kullback-Leibler divergence:
\begin{restatable}[KL control]{proposition}{kl}
    Let $\hat \rho_t$ be the solution to the Fokker-Planck equation
\begin{equation}
    \label{eq:rho:b}
    \partial_t \hat \rho_t + \nabla \cdot (\hat b_t\hat\rho_t) = \epsilon_t \nabla \cdot (\nabla U_t \hat\rho_t + \nabla \hat\rho_t), \qquad  \hat \rho_{t=0} = \rho_0
\end{equation}
where $\hat b_t(x)$ is some predefined velocity field and $\epsilon_t \ge0$.  Then, we have 
\begin{equation}
    \label{eq:KL}
    D_{\text{KL}}(\hat\rho_{t=1}||\rho_1) \le \sqrt{L^{T=1}_{\text{PINN}}(\hat b, F)}.
\end{equation}
where $F_t$ is the free energy. In addition, given any estimate $\hat F_t$  such that $\int_0^1 |\partial_t \hat F_t - \partial F_t|^2 dt \le \delta $ for some $\delta \ge0$, we have
\begin{equation}
    \label{eq:KL:2}
    D_{\text{KL}}(\hat\rho_{t=1}||\rho_1) \le \sqrt{2 L^{T=1}_{\text{PINN}}(\hat b, \hat F) + 2\delta}.
\end{equation}
\end{restatable}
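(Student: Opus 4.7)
My plan is to differentiate the KL divergence $D_{\text{KL}}(\hat\rho_t\|\rho_t) = \int \hat\rho_t \log(\hat\rho_t/\rho_t)\,dx$ along $t$, obtain an evolution inequality in which the right-hand side is precisely (up to a Cauchy--Schwarz step) the PINN integrand, and then integrate from $0$ to $1$ using $\hat\rho_0 = \rho_0$.

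First, I would expand $D_{\text{KL}}(\hat\rho_t\|\rho_t) = \int \hat\rho_t\log\hat\rho_t\,dx + \int \hat\rho_t U_t\,dx + \log Z_t$ and differentiate. Using $\int \partial_t\hat\rho_t\,dx = 0$ together with $\nabla\log\rho_t = -\nabla U_t$ and the relation $\partial_t F_t = -\partial_t \log Z_t$, this reduces (after substituting the given FPE for $\hat\rho_t$ and integrating by parts) to
\begin{equation*}
\frac{d}{dt} D_{\text{KL}}(\hat\rho_t\|\rho_t) = -\int \hat\rho_t\bigl(\nabla\cdot \hat b_t - \hat b_t\cdot\nabla U_t - \partial_t U_t + \partial_t F_t\bigr) dx - \epsilon_t \int \hat\rho_t\,|\nabla\log(\hat\rho_t/\rho_t)|^2 dx.
\end{equation*}
The key identity making this work is $\nabla U_t \hat\rho_t + \nabla\hat\rho_t = \hat\rho_t\,\nabla\log(\hat\rho_t/\rho_t)$, which turns the diffusion contribution into a manifestly non-positive Fisher-information-type term.

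Next, since the Fisher-type term is $\le 0$, it can be dropped. The first term is bounded above by $(\int \hat\rho_t R_t^2\,dx)^{1/2}$ via Cauchy--Schwarz (using $\int \hat\rho_t\,dx=1$), where $R_t(x) := \nabla\cdot \hat b_t - \hat b_t\cdot\nabla U_t - \partial_t U_t + \partial_t F_t$ is exactly the PINN residual with $\hat F = F$. Integrating from $0$ to $1$ and applying Cauchy--Schwarz once more in $t$ yields
\begin{equation*}
D_{\text{KL}}(\hat\rho_1\|\rho_1) \le \int_0^1 \Bigl(\int \hat\rho_t R_t^2 dx\Bigr)^{1/2} dt \le \sqrt{L^{T=1}_{\text{PINN}}(\hat b,F)},
\end{equation*}
using $D_{\text{KL}}(\hat\rho_0\|\rho_0)=0$, which gives \eqref{eq:KL}.

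For the second bound \eqref{eq:KL:2}, I write $R_t = \hat R_t + (\partial_t F_t - \partial_t\hat F_t)$, where $\hat R_t$ is the PINN residual built from $\hat F$ instead of $F$, and apply $(a+b)^2 \le 2a^2 + 2b^2$. Integrating in $x$ and $t$ and using the hypothesis $\int_0^1 |\partial_t\hat F_t - \partial_t F_t|^2 dt \le \delta$ gives $\int_0^1\int \hat\rho_t R_t^2 dx dt \le 2 L^{T=1}_{\text{PINN}}(\hat b,\hat F) + 2\delta$, from which \eqref{eq:KL:2} follows. The main obstacle is the bookkeeping in the differentiation of $D_{\text{KL}}$ and the integration by parts that collapses the drift and diffusion contributions into the clean $R_t$-plus-Fisher-dissipation form; once this algebraic step is done, the Cauchy--Schwarz bounds are routine.
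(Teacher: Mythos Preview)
Your proposal is correct and follows essentially the same route as the paper: differentiate $D_{\text{KL}}(\hat\rho_t\|\rho_t)$, use the FPE and the identity $\nabla U_t\hat\rho_t+\nabla\hat\rho_t=\hat\rho_t\nabla\log(\hat\rho_t/\rho_t)$ to obtain the residual-plus-Fisher decomposition, drop the non-positive Fisher term, and bound the remaining integral by Cauchy--Schwarz; the second inequality is likewise obtained via $(a+b)^2\le 2a^2+2b^2$. The only cosmetic difference is that you apply Cauchy--Schwarz first in $x$ and then in $t$, while the paper applies it once against the probability measure $\hat\rho_t(x)\,dx\,dt$ on $[0,1]\times\mathbb{R}^d$; both yield the same bound.
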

This proposition is proven in Appendix~\ref{app:proofs}. Notice that the bound \eqref{eq:KL} can be estimated by using  $\partial_t F_t = \E [ e^{A_t^{\hat b}} \partial_t U_t(X^{\hat b}_t)]/\E [ e^{A_t^{\hat b}}]$ in the PINN loss \eqref{eq:loss:b:F}.

\subsection{Estimating the drift $b_t(x)=\nabla \phi_t(x)$ via Action Matching (AM)}
\label{sec:phit:est}

In general \eqref{eq:b} is solved by many $b_t(x)$. One way to get a unique (up to a constant in space) solution to this equation is to impose that the velocity be in gradient form, i.e. set $b_t(x) = \nabla \phi_t(x)$ for some scalar-valued potential $\phi_t(x)$. If we do so, \eqref{eq:b} can be written as $\nabla \cdot (\nabla \phi_t(x) \rho_t) = -\partial_t \rho_t$, and it is easy to see that at all times $t\in[0,1]$ the solution to this equation minimizes over $\hat \phi_t$ the objective 
\begin{equation}
    \label{eq:var:phi}
    \begin{aligned}
        & \int_{\R^d} \big[ \tfrac12 |\nabla \hat \phi_t(x)|^2 \rho_t(x) - \hat\phi_t(x) \partial_t \rho_t(x)\big] dx\\& = 
        \int_{\R^d} \big[ \tfrac12 |\nabla \hat \phi_t(x)|^2 +(\partial_t U_t(x) - \partial_t F_t) \hat\phi_t(x)\big] \rho_t(x)dx.
    \end{aligned}
\end{equation}
If we use~\eqref{eq:dtF} to set $\partial_t F_t = \int_{\R^d} \partial_t U_t(x) \rho_t(x)dx$ we can use the objective at the right hand-side of~\eqref{eq:var:phi} to learn $\phi_t(x)$ locally in time (or globally if we integrate this objective on $t\in[0,1]$). Alternatively, we can integrate the objective at the left hand-side of~\eqref{eq:var:phi} over $t\in[0,T]$ and use integration by parts for the term involving $\partial_t \rho_t(x)$ to arrive at:
\begin{restatable}[Action Matching objective]{proposition}{am}
    \label{prop:am}
     Given any $T\in(0,1]$ consider the objective for $\hat \phi_t(x)$:
    \begin{equation}
        \label{eq:loss:phi}
        L^T_{AM} [\hat \phi] = \int_0^T \int_{\R^d}\big[ \tfrac12 |\nabla \hat \phi_t(x)|^2  +\partial_t \hat\phi_t(x) \big] \rho_t(x)dx dt + \int_{\R^d} \big[ \hat\phi_0(x) \rho_0(x) - \hat\phi_T(x) \rho_T(x)\big ] dx .
    \end{equation}
    Then the minimizer $\phi_t(x)$ of~\eqref{eq:loss:b:F} is unique (up to a constant) and $b_t(x) = \nabla \phi_t(x)$ satisfies~\eqref{eq:b} for all $t\in[0,T]$.
\end{restatable}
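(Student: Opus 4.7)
The plan is to derive the Euler–Lagrange equation of $L^T_{AM}$ by first variation and show that it coincides with the requirement $\nabla\cdot(\nabla\phi_t \rho_t) = -\partial_t\rho_t$, i.e. equation \eqref{eq:b} with $b_t = \nabla\phi_t$. I will then use convexity to get uniqueness up to a (possibly time-dependent) additive constant.

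First I would compute the first variation. For a smooth perturbation $\delta\phi_t(x)$ that is arbitrary at all times (including $t=0$ and $t=T$), the directional derivative of $L^T_{AM}$ is
\begin{equation}
\delta L^T_{AM} = \int_0^T\!\!\int_{\R^d}\!\bigl[\nabla\hat\phi_t\cdot\nabla\delta\phi_t + \partial_t\delta\phi_t\bigr]\rho_t\,dx\,dt + \int_{\R^d}\!\bigl[\delta\phi_0\rho_0 - \delta\phi_T\rho_T\bigr]dx.
\end{equation}
Integration by parts in space on the first term gives $-\int_0^T\!\int \delta\phi_t\,\nabla\cdot(\nabla\hat\phi_t\,\rho_t)\,dx\,dt$ (assuming enough decay at infinity so there is no spatial boundary contribution). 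Integration by parts in time on the $\partial_t\delta\phi_t$ term gives
\begin{equation}
\int_0^T\!\!\int \partial_t\delta\phi_t\,\rho_t\,dx\,dt = \int \delta\phi_T\rho_T\,dx - \int \delta\phi_0\rho_0\,dx - \int_0^T\!\!\int \delta\phi_t\,\partial_t\rho_t\,dx\,dt.
\end{equation}
The two boundary terms from this time-integration by parts cancel exactly with the explicit boundary integrals in $\delta L^T_{AM}$, leaving
\begin{equation}
\delta L^T_{AM} = -\int_0^T\!\!\int_{\R^d}\bigl[\nabla\cdot(\nabla\hat\phi_t\rho_t) + \partial_t\rho_t\bigr]\,\delta\phi_t\,dx\,dt.
\end{equation}
Since $\delta\phi_t$ is arbitrary, stationarity forces $\nabla\cdot(\nabla\phi_t\rho_t) = -\partial_t\rho_t$ pointwise in $(t,x)\in(0,T)\times\R^d$, which is exactly \eqref{eq:b} with $b_t = \nabla\phi_t$.

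For uniqueness, the second variation is $\int_0^T\!\int|\nabla\delta\phi_t|^2\rho_t\,dx\,dt\ge 0$, so $L^T_{AM}$ is convex in $\hat\phi$ and strictly convex modulo additive functions with $\nabla\delta\phi_t\equiv 0$, i.e. modulo time-dependent spatial constants $c(t)$. I would finish by checking that the linear part of the functional is also invariant under $\hat\phi_t\mapsto\hat\phi_t+c(t)$: the extra contribution is $\int_0^T c'(t)\,dt + c(0) - c(T) = 0$ because $\int\rho_t\,dx=1$ at every $t$. Hence $L^T_{AM}$ is completely invariant under such shifts and strictly convex on the quotient, which yields uniqueness of the minimizing gradient $\nabla\phi_t$ and of $\phi_t$ itself up to an additive constant.

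The main technical point is bookkeeping the two integration-by-parts steps so that the temporal boundary contributions cancel exactly with the boundary integrals built into $L^T_{AM}$; this cancellation is precisely what makes the loss well-posed without enforcing boundary conditions on $\hat\phi_0$ or $\hat\phi_T$. A minor caveat worth mentioning is the spatial-boundary-at-infinity term in the integration by parts, which is harmless under the mild decay assumption that $|\nabla\hat\phi_t(x)|\rho_t(x)\to 0$ as $|x|\to\infty$ (consistent with $\rho_t$ being the normalizable PDF defined in \eqref{eq:rhot}).
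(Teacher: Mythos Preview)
Your proof is correct and follows essentially the same approach as the paper: an integration by parts in time turns the objective into $\int_0^T\int_{\R^d}[\tfrac12|\nabla\hat\phi_t|^2\rho_t - \hat\phi_t\,\partial_t\rho_t]\,dx\,dt$, whose Euler--Lagrange equation is \eqref{eq:b} with $b_t=\nabla\phi_t$. The only minor difference is in the uniqueness argument: the paper invokes the Fredholm alternative on the elliptic equation $\nabla\cdot(\nabla\phi_t\rho_t)=-\partial_t\rho_t$ (using the solvability condition $\int\partial_t\rho_t\,dx=0$), whereas you argue directly via strict convexity modulo additive constants $c(t)$ and check the gauge invariance of $L^T_{AM}$ under such shifts---both are valid and yield the same conclusion.
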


This proposition is proven in Appendix~\ref{app:proofs}. This objective is analogous to the loss presented in \cite{neklyudov2023}, but adapted to the sampling problem. In practice, we will use again $T\in(0,1]$ for annealing, but ultimately we are interested in the resut at $T=1$. Note that, unlike with the PINN objective~\eqref{eq:loss:b:F}, it is crucial that we use the correct $\rho_t(x)$ in the AM objective~\eqref{eq:loss:phi}: that is, unlike~\eqref{eq:loss:b:F}, \eqref{eq:loss:phi} cannot be turned into an off-policy objective.

\begin{remark} If we use $\hat b_t(x) = \nabla \hat \phi_t(x)$ in the SDEs in~\eqref{eq:sde:Xb} and~\eqref{eq:sde:Ab}, we need to compute $\nabla \cdot \hat b_t(x) = \Delta \hat \phi_t(x)$, which is computationally costly. Fortunately, when $\eps_t>0$, the calculation of this Laplacian can be avoided by using the following alternative equation for $A^{\hat b}_t$:
\begin{equation}
    \label{eq:A:new}
    A^{\hat b}_t = \frac{1}{\eps_t} [ \hat\phi_t(X^{\hat b}_t) - \hat\phi_0(X^{\hat b}_0)] - B_t,
\end{equation}
where
\begin{equation}
    \label{eq:Bt}
		dB_t = \partial_t U_t(X^{\hat b}_t) dt  +  \frac{1}{\eps_t} \partial_t \hat\phi_t (X_t^{\hat b})+ \frac{1}{\eps_t} |\nabla \hat\phi_t(X^{\hat b}_t)|^2 dt + \sqrt{\frac2{\eps_t}} \nabla \hat\phi_t(X^{\hat b}_t) \cdot dW_t.
\end{equation}
This equation is derived in Appendix~\ref{app:proofs}.
\end{remark}

\section{Extensions and Generalizations}
\label{sec:ext:gen}

\subsection{Inertial NETS}
\label{sec:under:admp}

It is straightforward to generalize Proposition~\ref{prop:netsu} so that the stochastic dynamics involves some memory/inertia: 

\begin{restatable}{proposition}{netsu}
\label{prop:netsu}
Let $(X^{\hat b,\mu}_t,R^{\hat b,}_t,A^{\hat b,\mu}_t)$ solve the coupled system of SDE/ODE
 \begin{alignat}{2}
 \label{eq:sde:Xbu}
	dX^{\hat b,\mu}_t &=  \hat b_t(X^{\hat b,\mu}_t) dt + R^{\hat b,\mu}_t dt,\quad  && X^{\hat b,\mu}_0 \sim \rho_0, \\
\label{eq:sde:Rbu}
    dR^{\hat b,\mu}_t &=-  \mu  \nabla U_t(X^{\hat b,\mu}_t ) dt  - \mu\eps^{-1}_t  R^{\hat b,\mu}_t dt + \mu \sqrt{2 \eps^{-1}_t} dW_t,  \quad  && R^{\hat b,\mu}_0 \sim N(0,\mu\text{Id}), \\
  \label{eq:sde:Abu}
		d A^{\hat b,\mu}_t &= \nabla \cdot \hat b_t(X^{\hat b,\mu}_t )dt- \nabla U_t(X^{\hat b,\mu}_t ) \cdot \hat b_t(X^{\hat b,\mu}_t) dt -\partial_t U_t(X^{\hat b,\mu}_t)dt,  \quad && A^{\hat b,\mu}_0 = 0,
	\end{alignat}
	where $\eps_t> 0$ is a time-dependent diffusion coefficient, $\mu\ge0$ is a  mobility coefficient, and $W_t \in \mathbb R^d$ is the Wiener process. Then for all $t\in[0,1]$ and any test function $h:\R^d\to\R$, we have 
	\begin{equation}
 \label{eq:expect:bu}
		\int_{\mathbb R^d} h(x) \rho_t(x) dx = \frac{\mathbb E[ e^{A^{\hat b,\mu}_t} h(X^{\hat b,\mu}_t)]}{\mathbb E[e^{A^{\hat b,\mu}_t}]}, \qquad Z_t/Z_0 = e^{-F_t+F_0} = \E[e^{A^{\hat b,\mu}_t}],
	\end{equation}
	where the  expectations are taken over the law of $(X^{\hat b,\mu}_t, A^{\hat b,\mu}_t)$.
\end{restatable}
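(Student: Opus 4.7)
The plan is to extend the proof of Proposition~\ref{prop:nets} to the enlarged phase space $(x,r,a)\in\R^{2d+1}$ that accounts for momentum. Let $\nu(r)=(2\pi\mu)^{-d/2}\exp(-|r|^2/(2\mu))$ denote the density of $N(0,\mu\,\text{Id})$ and let $f_t(x,r,a)$ be the joint PDF of $(X^{\hat b,\mu}_t,R^{\hat b,\mu}_t,A^{\hat b,\mu}_t)$. Everything will reduce to showing that the tilted marginal
\begin{equation}
\tilde f_t(x,r) \;=\; \int_{\R} e^{a}\,f_t(x,r,a)\,da
\end{equation}
equals $(Z_t/Z_0)\,\rho_t(x)\,\nu(r)$: once this is in hand, \eqref{eq:expect:bu} follows by writing $\E[e^{A^{\hat b,\mu}_t}h(X^{\hat b,\mu}_t)] = \int h(x)\tilde f_t(x,r)\,dx\,dr$ and $\E[e^{A^{\hat b,\mu}_t}] = \int \tilde f_t(x,r)\,dx\,dr = Z_t/Z_0$, and taking the ratio.

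First I would write down the Fokker--Planck equation for $f_t$ directly from the SDE/ODE system \eqref{eq:sde:Xbu}--\eqref{eq:sde:Abu}, multiply by $e^a$, and integrate over $a\in\R$. Integration by parts (assuming $f_t\to 0$ as $a\to\pm\infty$) converts the $\partial_a f_t$ term into a source $(\nabla\cdot\hat b_t - \nabla U_t\cdot\hat b_t - \partial_t U_t)\tilde f_t$, while the $(x,r)$-derivative terms pass through the integral unchanged. This produces a closed linear evolution for $\tilde f_t(x,r)$ built from a Hamiltonian transport part $-\nabla_x\cdot(r\,\cdot)+\mu\nabla_r\cdot(\nabla U_t\,\cdot)$, an Ornstein--Uhlenbeck piece $\mu\eps_t^{-1}\nabla_r\cdot(r\,\cdot)+\mu^2\eps_t^{-1}\Delta_r(\cdot)$ in $r$, the additional transport $-\nabla_x\cdot(\hat b_t\,\cdot)$ in $x$, and the above source.

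Next I would substitute the ansatz $\tilde f_t(x,r)=C_t\rho_t(x)\nu(r)$ and verify three cancellations. (i) The Hamiltonian pair cancels on $\rho_t\nu$ because $\nabla\rho_t=-\nabla U_t\rho_t$ and $\nabla_r\nu=-r\nu/\mu$, giving $-\nabla U_t\cdot r\,\rho_t\nu + r\cdot\nabla U_t\,\rho_t\nu = 0$. (ii) The Ornstein--Uhlenbeck pair vanishes on $\nu$ because $N(0,\mu\,\text{Id})$ is its invariant measure; this reduces to checking that the $|r|^2$ and $d$ contributions in $\nabla_r\cdot(r\nu)=(d-|r|^2/\mu)\nu$ and $\mu^2\eps_t^{-1}\Delta_r\nu=\eps_t^{-1}(|r|^2-\mu d)\nu$ cancel exactly. (iii) The identity $-\nabla_x\cdot(\hat b_t\rho_t\nu)=(-\nabla\cdot\hat b_t+\hat b_t\cdot\nabla U_t)\rho_t\nu$ combines with the source term to leave only $-\partial_t U_t\,\rho_t\nu$ on the right-hand side. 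Matching this against $\partial_t(C_t\rho_t\nu)=\dot C_t\rho_t\nu-(\partial_t U_t-\partial_t F_t)C_t\rho_t\nu$ gives the scalar ODE $\dot C_t=-\partial_t F_t\,C_t$, and the initial condition $\tilde f_0=\rho_0\nu$ (since $A^{\hat b,\mu}_0=0$) forces $C_0=1$, so $C_t=e^{-F_t+F_0}=Z_t/Z_0$.

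The hard part will be the momentum bookkeeping in step~(ii): one must track all the signs arising from $\nabla_r\nu$ and $\Delta_r\nu$ so that every $\eps_t$- and $\mu$-dependent coefficient cancels without residual. Step~(iii) exactly mirrors the manipulation used in the overdamped proof of Proposition~\ref{prop:nets}: the $\hat b_t$-dependence enters the evolution of $\tilde f_t$ in the same combination as before, which is why the identical correction weight $A^{\hat b,\mu}_t$ works unchanged by the inertial extension. Once the momentum algebra is settled, the rest of the argument is a direct copy of the derivation in Appendix~\ref{app:proofs}, completing the proof.
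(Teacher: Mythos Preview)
Your proposal is correct and follows exactly the template the paper uses for Propositions~\ref{prop:jar} and~\ref{prop:nets}: write the Fokker--Planck equation on the extended phase space, tilt by $e^a$ and integrate out $a$, then verify by direct substitution that the resulting linear PDE is solved by the (unnormalized) product $e^{-U_t(x)+F_0}\,\nu(r)$. The three cancellations you isolate---Hamiltonian, Ornstein--Uhlenbeck invariance of $\nu$, and the $\hat b_t$-transport/source recombination---are precisely the additional checks needed beyond the overdamped case, and your bookkeeping on the $\mu$- and $\eps_t$-coefficients is right.
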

The proof of this proposition is given in Appendix~\ref{app:proofs}. Note that when $\hat b= b$, the solution to \eqref{eq:b:2}, \eqref{eq:sde:Abu} is simply
\begin{equation}
    A^{b,\gamma}_t = -F_t +F_0,
\end{equation} 
i.e. the weights are again deterministic with zero variance.  In general, $\hat b$ will not be the optimal one, in which case using the SDE in~\eqref{eq:sde:Xbu}-\eqref{eq:sde:Abu} gives us the extra parameter $\mu$ to play with post-training to improve the ESS. In Appendix~\ref{app:proofs} we show that \eqref{eq:sde:Xbu}-\eqref{eq:sde:Abu}  reduce to \eqref{eq:sde:Xb}-\eqref{eq:sde:Ab} in the limit as $\mu \to\infty$.  It is also easy to see that, if we set $\mu=0$ in~\eqref{eq:sde:Xbu}-\eqref{eq:sde:Abu}, we simply get that $R^{\hat b,\mu}_t =0$ and hence~\eqref{eq:sde:Xbup} reduces to the ODE $dX^{\hat b,\mu}_t =  \hat b_t(X^{\hat b,\mu}_t) dt $. Finally it is worth noting that \eqref{eq:sde:Xbu}-\eqref{eq:sde:Rbu} can be cast into Langevin equations with some extra forces. Indeed, if we introduce the velocity $V^{\hat b,\mu}_t  =  \hat b_t(X^{\hat b,\mu}_t) + R^{\hat b,\mu}_t $,  \eqref{eq:sde:Xbu}-\eqref{eq:sde:Rbu} can be written as 
\begin{alignat}{2}
 \label{eq:sde:Xbup}
		dX^{\hat b,\mu}_t &= V^{\hat b,\mu}_t dt \qquad  && X^{\hat b,\mu}_0 \sim \rho_0, \\
    \nonumber
    dV^{\hat b,\mu}_t &=-  \mu \nabla U_t(X^{\hat b,\mu}_t ) dt + \mu \eps_t^{-1} \hat b_t(X^{\hat b,\mu}_t) dt - \partial_t \hat b_t  (X^{\hat b,\mu}_t) dt&& \\
    \label{eq:sde:Vbup}
    & \quad  + \nabla b_t(X^{\hat b,\mu}_t) V^{\hat b,\mu}_t dt - \mu\eps^{-1}_t V^{\hat b,\mu}_t dt +  \mu \sqrt{2 \eps^{-1}_t}  dW_t,  \quad  && V^{\hat b,\mu}_0 \sim N(\hat b_0(X^{\hat b,\mu}_0), \mu\text{Id})
\end{alignat}
In these equations, the terms $\mu \eps_t^{-1} \hat b_t - \partial_t \hat b_t$ can be interpreted as non-conservative forces added to $-  \mu \nabla U_t$, and the term $ \nabla b_t V^{\hat b,\mu}_t$ as an extra friction term added to $- \mu\eps^{-1}_t V^{\hat b,\mu}_t$.

\subsection{Multimarginal NETS}
\label{sec:multi}

Let $\mathcal{U}(\alpha,x)$ be a potential depending on $\alpha \in D \subset \R^N$ with $N\in \N$ as well as $x\in\R^d$, and assumed to be continuously differentiable in both arguments. Assume that  $e^{-\mathcal{U}(\alpha,x)}$ is integrable in $x$ for all $\alpha\in D$, and define the family of PDF
\begin{equation}
    \label{eq:rho:alpha}
    \varrho(\alpha,x) = e^{-\mathcal{U}(\alpha,x) + \mathcal{F}(\alpha)}, \qquad \mathcal{F}(\alpha) = - \log \int_{\R^d} e^{-\mathcal{U}(\alpha,x)} dx.
\end{equation}
Finally, define the family of matrix-valued $\Hat{\mathcal{B}}(\alpha,x): D\times \R^d \to \R^N \times \R^d$, assumed to be continuously differentiable in both arguments. These quantities allow us to give a generalization of Proposition~\ref{prop:nets} in which we can sample the PDF $\varrho(\alpha,x)$ along any differential path $\alpha_t \in D$:

\begin{restatable}{proposition}{netsmarg}
\label{prop:netsmar}
Let $\alpha:[0,1]\to D$ be a differentiable path in $D$ and define the vector field $b:[0,1]\times \R^d \to \R^d$ as
\begin{equation}
    \label{eq:b:B}
    \hat b^\alpha_t(x) = \dot \alpha^T_t \Hat{\mathcal{B}}(\alpha_t,x)
\end{equation}
as well as
\begin{equation}
    \label{eq:U:U}
    U^\alpha_t(x) = \mathcal{U}(\alpha_t,x), \qquad F^\alpha_t = \mathcal{F}(\alpha_t), \qquad \rho_t^\alpha = \varrho(\alpha,x) = e^{-U^\alpha_t(x)+F^\alpha_t}
\end{equation}
Let $(X^{\hat b,\alpha}_t,A^{\hat b,\alpha}_t)$ solve the coupled system of SDE/ODE
 \begin{alignat}{2}
 \label{eq:sde:Xba}
	dX^{\hat b,\alpha}_t &=  \hat b^\alpha_t(X^{\hat b,\alpha}_t) dt - \eps_t \nabla U_t^\alpha(X^{\hat b,\alpha}_t) dt + \sqrt{2\eps_t} dW_t\quad  && X^{\hat b,\alpha}_0 \sim \rho^\alpha_0, \\
  \label{eq:sde:Aba}
		d A^{\hat b,\alpha}_t &= \nabla \cdot \hat b^\alpha_t(X^{\hat b,\alpha}_t )dt- \nabla U^\alpha_t(X^{\hat b,\alpha}_t ) \cdot \hat b_t^\alpha (X^{\hat b,\alpha}_t) dt -\partial_t U^\alpha_t(X^{\hat b,\alpha}_t)dt,  \quad && A^{\hat b,\alpha}_0 = 0,
	\end{alignat}
	where $\eps_t> 0$ is a time-dependent diffusion coefficient and $W_t \in \mathbb R^d$ is the Wiener process. Then for all $t\in[0,1]$ and any test function $h:\R^d\to\R$, we have 
	\begin{equation}
 \label{eq:expect:ba}
		\int_{\mathbb R^d} h(x) \rho^\alpha_t(x) dx = \frac{\mathbb E[ e^{A^{\hat b,\alpha}_t} h(X^{\hat b,\alpha}_t)]}{\mathbb E[e^{A^{\hat b,\alpha}_t}]}, \qquad  e^{-F^\alpha_t+F^\alpha_0} = \E[e^{A^{\hat b,\alpha}_t}],
	\end{equation}
	where the  expectations are taken over the law of $(X^{\hat b,\alpha}_t, A^{\hat b,\alpha}_t)$.
\end{restatable}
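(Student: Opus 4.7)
The plan is to reduce Proposition~\ref{prop:netsmar} to Proposition~\ref{prop:nets} by introducing the time-dependent objects
\[
    U^\alpha_t(x) = \mathcal{U}(\alpha_t,x), \qquad F^\alpha_t = \mathcal{F}(\alpha_t), \qquad \rho^\alpha_t(x) = e^{-U^\alpha_t(x)+F^\alpha_t},
\]
and verifying that $(U^\alpha_t, F^\alpha_t, \rho^\alpha_t)$ together with the drift $\hat b^\alpha_t(x) = \dot\alpha_t^\top \Hat{\mathcal{B}}(\alpha_t,x)$ satisfy all the hypotheses used to prove Proposition~\ref{prop:nets}. The core observation is that the parameterization by $\alpha$ enters only through a smooth path $t\mapsto \alpha_t$, so by the chain rule the time derivatives that appear in the NETS SDE/ODE system have the correct form for a standard one-parameter family.

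First I would check the setup hypotheses: since $\mathcal{U}(\alpha,x)$ is continuously differentiable in both arguments and $e^{-\mathcal{U}(\alpha,x)}$ is integrable for each $\alpha$, the path $t\mapsto \alpha_t$ yields a smooth family of normalizable densities $\rho^\alpha_t$ with free energies $F^\alpha_t$; in particular $\partial_t U^\alpha_t(x) = \dot\alpha_t^\top \nabla_\alpha \mathcal{U}(\alpha_t,x)$ and $\partial_t F^\alpha_t = \dot\alpha_t^\top \nabla_\alpha \mathcal{F}(\alpha_t)$ are well-defined, and the identity \eqref{eq:dtF} for $\partial_t F^\alpha_t$ follows from differentiating $\mathcal{F}(\alpha)$ in $\alpha$ and contracting with $\dot\alpha_t$. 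Next I would observe that with the substitutions $U_t \leftarrow U^\alpha_t$, $F_t \leftarrow F^\alpha_t$, $\rho_t \leftarrow \rho^\alpha_t$, and $\hat b_t \leftarrow \hat b^\alpha_t$, the system \eqref{eq:sde:Xba}--\eqref{eq:sde:Aba} is literally the system \eqref{eq:sde:Xb}--\eqref{eq:sde:Ab}. Hence Proposition~\ref{prop:nets} applies verbatim, giving both identities in \eqref{eq:expect:ba}.

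If a more self-contained route is preferred, I would instead recycle the FPE manipulation from the proof of Proposition~\ref{prop:nets}: start from $\nabla \rho^\alpha_t = -\nabla U^\alpha_t\, \rho^\alpha_t$ to get $0 = \eps_t\nabla\cdot(\nabla U^\alpha_t \rho^\alpha_t + \nabla \rho^\alpha_t)$, then use $\partial_t \rho^\alpha_t = -(\partial_t U^\alpha_t - \partial_t F^\alpha_t)\rho^\alpha_t$ together with the identity
\[
    -\nabla\cdot(\hat b^\alpha_t \rho^\alpha_t) = -\nabla\cdot \hat b^\alpha_t\, \rho^\alpha_t + \nabla U^\alpha_t \cdot \hat b^\alpha_t\, \rho^\alpha_t
\]
(which holds for any vector field, including $\hat b^\alpha_t = \dot\alpha_t^\top \Hat{\mathcal{B}}(\alpha_t,\cdot)$) to rewrite the trivial FPE as
\[
    \partial_t \rho^\alpha_t = \eps_t \nabla\cdot(\nabla U^\alpha_t \rho^\alpha_t + \nabla \rho^\alpha_t) - \nabla\cdot(\hat b^\alpha_t \rho^\alpha_t) + (\nabla\cdot \hat b^\alpha_t - \nabla U^\alpha_t \cdot \hat b^\alpha_t - \partial_t U^\alpha_t + \partial_t F^\alpha_t)\rho^\alpha_t .
\]
Extending the state space by $a$ and introducing the joint density $f^\alpha_t(x,a)$ with FPE analogous to \eqref{eq:fpe:f} yields the representation $\rho^\alpha_t(x)\propto \int e^a f^\alpha_t(x,a)\,da$, from which \eqref{eq:expect:ba} follows by the same computation as in Appendix~\ref{app:proofs}.

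I expect no substantive obstacle: the multimarginal result is essentially a change-of-variables from $t$ to a smooth curve in $D$, and the only care required is bookkeeping of time derivatives through $\alpha_t$ and checking that the matrix-valued parameterization $\Hat{\mathcal{B}}(\alpha_t,x)$ produces an admissible $\R^d$-valued drift once contracted with $\dot\alpha_t$. The second identity in \eqref{eq:expect:ba}, giving $e^{-F^\alpha_t+F^\alpha_0}=\E[e^{A^{\hat b,\alpha}_t}]$, then drops out by setting $h\equiv 1$ in the first identity, exactly as in Proposition~\ref{prop:nets}.
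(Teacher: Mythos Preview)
Your proposal is correct and matches the paper's own approach: the paper explicitly states that Proposition~\ref{prop:netsmar} is a simple consequence of Proposition~\ref{prop:nets} and omits a separate proof, which is precisely your reduction via the substitutions $U_t \leftarrow U^\alpha_t$, $\hat b_t \leftarrow \hat b^\alpha_t$, etc. One small slip: setting $h\equiv 1$ in the first identity of \eqref{eq:expect:ba} yields only the tautology $1=1$; the partition-function identity $\E[e^{A^{\hat b,\alpha}_t}]=e^{-F^\alpha_t+F^\alpha_0}$ comes instead from the explicit form $g_t(x)=e^{-U^\alpha_t(x)+F^\alpha_0}$ and its integral, as in the proof of Proposition~\ref{prop:nets}.
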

We will omit to give the proof of this proposition since it is a simple consequence of Proposition~\ref{prop:nets}. The interest in formulating the problem in this new  way is that is it easy to see that the right hand side of~\eqref{eq:sde:Aba} (with $\partial_t F^\alpha_t$ added for convenience) can be written as
\begin{equation}
    \label{eq:q:0}
    \begin{aligned}
        &\nabla \cdot \hat b^\alpha_t(x )- \nabla U^\alpha_t(x ) \cdot \hat b_t^\alpha (x)  -\partial_t U^\alpha_t(x) + \partial_t F^\alpha_t\\
        & = \dot \alpha^T_t\left( \nabla_x \cdot \Hat{\mathcal{B}}(\alpha_t,x ) - \Hat{\mathcal{B}}(\alpha_t,x ) \nabla_x \mathcal{U}(\alpha_t,x) - \nabla_\alpha \mathcal{U}(\alpha_t,x) + \nabla_\alpha \mathcal{F}(\alpha_t)\right). 
    \end{aligned}
\end{equation}
Therefore if we zero this term for all $(\alpha,x)\in D\times \R^d$ by picking the right $\Hat{\mathcal{B}}(\alpha,x )$ we will obtain that~\eqref{eq:sde:Aba} reduces to $d A^{\hat b,\alpha}_t = -F^\alpha_t dt$, i.e. $A^{\hat b,\alpha}_t = F^\alpha_t - F^\alpha_0$. Finding this optimal $\mathcal{B}(\alpha,x )$ can be obtained using the following result:
\begin{restatable}[Multimarginal PINN objective]{proposition}{pinnm}
    \label{prop:pinnm}
    Consider the objective for $(\Hat{\mathcal{B}},\Hat{\mathcal{F}})$ given by:
    \begin{equation}
    \label{eq:q:pinn}
    \begin{aligned}
        &L^{\alpha}_{\text{PINN}}[\Hat{\mathcal{B}},\Hat{\mathcal{F}}]  \\
        &= \int_D \int_{\R^d} \left|\nabla_x \cdot \Hat{\mathcal{B}}(\alpha,x ) - \Hat{\mathcal{B}}(\alpha,x ) \nabla_x \mathcal{U}(\alpha,x) - \nabla_\alpha \mathcal{U}(\alpha,x) + \nabla_\alpha \Hat{\mathcal{F}}(\alpha)\right|^2 \hat \varrho(\alpha,x) f(\alpha) dx d\alpha 
    \end{aligned}
\end{equation}
where $\hat \varrho(\alpha,x) >0$ is a PDF in $x$ for all $\alpha\in D$, and $f(\alpha)$ is a PDF in $\alpha$.
    Then $\min_{\mathcal{B},\mathcal{F}} L^\alpha_{\text{PINN}}[\Hat{\mathcal{B}},\Hat{\mathcal{F}}] =0$, and all minimizers $(\mathcal{B},\mathcal{F})$ are such that  and $\mathcal{B}(\alpha,x)$ solves 
    \begin{equation}
        \label{eq:B:eq}
      \forall (\alpha,x)\in D\times \R^d \ : \quad  0= \nabla_x \cdot \Hat{\mathcal{B}}(\alpha,x ) - \Hat{\mathcal{B}}(\alpha,x ) \nabla_x \mathcal{U}(\alpha,x) - \nabla_\alpha \mathcal{U}(\alpha,x) + \nabla_\alpha \mathcal{F}(\alpha),
    \end{equation}
    and $\mathcal{F}(\alpha)$ is the free energy \eqref{eq:rho:alpha} for all $\alpha\in D$. 
\end{restatable}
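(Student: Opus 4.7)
The plan is to adapt the argument of Proposition~\ref{prop:pinn} component-wise to each of the $N$ directions $\alpha_1,\dots,\alpha_N$ in parameter space. Since the integrand in $L^\alpha_{\text{PINN}}[\Hat{\mathcal{B}},\Hat{\mathcal{F}}]$ is a squared $\R^N$-valued quantity weighted by $\hat\varrho(\alpha,x)f(\alpha)\ge 0$, we have $L^\alpha_{\text{PINN}}\ge 0$, so it suffices to exhibit a minimizer and to characterize all of them. First I would recast \eqref{eq:B:eq} as a family of continuity equations. Multiplying the $i$-th component of \eqref{eq:B:eq} by $\tilde\varrho(\alpha,x):=e^{-\mathcal{U}(\alpha,x)+\Hat{\mathcal{F}}(\alpha)}$ and using that $\Hat{\mathcal{F}}$ is $x$-independent gives, via $\nabla_x\tilde\varrho=-\tilde\varrho\nabla_x\mathcal{U}$ and $\partial_{\alpha_i}\tilde\varrho=\tilde\varrho(\partial_{\alpha_i}\Hat{\mathcal{F}}-\partial_{\alpha_i}\mathcal{U})$, the equivalent reformulation
\begin{equation*}
\partial_{\alpha_i}\tilde\varrho(\alpha,x)+\nabla_x\cdot\bigl(\Hat{\mathcal{B}}_i(\alpha,x)\,\tilde\varrho(\alpha,x)\bigr)=0, \qquad i=1,\dots,N,
\end{equation*}
where $\Hat{\mathcal{B}}_i$ denotes the $i$-th row of $\Hat{\mathcal{B}}$. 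Thus \eqref{eq:B:eq} is precisely the requirement that $\Hat{\mathcal{B}}_i(\alpha,\cdot)$ transport $\tilde\varrho$ in the $\alpha_i$ direction for every $i$.

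The second step is to pin down $\Hat{\mathcal{F}}$ from a compatibility condition and then construct $\Hat{\mathcal{B}}$. Integrating the continuity equation in $x$ with the standard decay assumption on $\Hat{\mathcal{B}}_i\tilde\varrho$ kills the divergence term and leaves $\partial_{\alpha_i}\int_{\R^d}\tilde\varrho(\alpha,x)dx=0$. Writing $\tilde\varrho=e^{\Hat{\mathcal{F}}(\alpha)-\mathcal{F}(\alpha)}\varrho$ and using $\int_{\R^d}\varrho(\alpha,x)dx=1$ from \eqref{eq:rho:alpha}, this forces $\Hat{\mathcal{F}}(\alpha)=\mathcal{F}(\alpha)+c$ for some constant $c$, which is the only freedom in $\Hat{\mathcal{F}}$ since the loss depends on it only through $\nabla_\alpha\Hat{\mathcal{F}}$. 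Conversely, once $\Hat{\mathcal{F}}=\mathcal{F}$, the compatibility condition $\int_{\R^d}\partial_{\alpha_i}\varrho\,dx=0$ holds, so for each $\alpha\in D$ and each $i$ the equation $\nabla_x\cdot(\Hat{\mathcal{B}}_i\varrho)=-\partial_{\alpha_i}\varrho$ admits a solution in $x$---for instance in gradient form $\Hat{\mathcal{B}}_i=\nabla_x\phi_i$ via the associated Poisson problem for $\phi_i$, or via the Feynman--Kac representation of Appendix~\ref{app:fk}. Assembling these rows yields a $\Hat{\mathcal{B}}$ that makes the integrand in \eqref{eq:q:pinn} vanish identically on $D\times\R^d$, whence $\min L^\alpha_{\text{PINN}}=0$ and the minimizers are exactly the pairs described in the statement.

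The main, and essentially only, delicate point is the existence and joint regularity of $\Hat{\mathcal{B}}$ in $(\alpha,x)$: one needs it to be differentiable in both arguments so that $\nabla_x\cdot\Hat{\mathcal{B}}$ in \eqref{eq:B:eq} makes classical sense. This is the same analytic question as in the scalar Proposition~\ref{prop:pinn}, with $\alpha$ now playing the role of $t$, and is handled under the mild decay and smoothness assumptions on $\mathcal{U}$ already implicit in \eqref{eq:rho:alpha}; no new analytic ingredient beyond the scalar case is required. A minor additional remark is that to upgrade the statement ``integrand vanishes $\hat\varrho f$-a.e.'' to the pointwise equation \eqref{eq:B:eq} everywhere on $D\times\R^d$ one needs $\hat\varrho(\alpha,\cdot)>0$ on $\R^d$ for every $\alpha$ and $f>0$ on $D$, both of which are built into the hypotheses.
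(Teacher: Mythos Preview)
Your proposal is correct and follows essentially the same route as the paper's proof of the scalar Proposition~\ref{prop:pinn}, carried out component-by-component in the $N$ directions of~$\alpha$: multiply the vanishing integrand by the (unnormalized) Boltzmann density to turn each component into a continuity equation, integrate in $x$ to obtain the solvability condition that forces $\nabla_\alpha\Hat{\mathcal{F}}=\nabla_\alpha\mathcal{F}$, and then observe that the continuity equation for each row $\Hat{\mathcal{B}}_i$ is solvable. The only cosmetic difference is that the paper multiplies by the true $\varrho=e^{-\mathcal{U}+\mathcal{F}}$ whereas you multiply by $\tilde\varrho=e^{-\mathcal{U}+\Hat{\mathcal{F}}}$; since these differ by an $x$-independent factor, the arguments are equivalent.
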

The proof of this proposition is given in Appendix~\ref{app:proofs}.

\section{Implementation}
\label{sec:imp} 
The computation of the divergence $\nabla \cdot b_t(x)$ in the PINN objective given in \eqref{eq:loss:b:F} can  be avoided by using Hutchinson's trace estimator, see Appendix~\ref{sec:hutch}. If we minimize \eqref{eq:loss:b:F} \textit{off-policy}, i.e. with samples from some $\hat \rho_t \neq \rho_t$, this is perfectly valid, but may be inefficient for learning $\hat b_t$ over the support necessary for the problem. If we decide instead to set $\hat \rho_t(x) = \rho_t(x)$, since the SDEs in~\eqref{eq:sde:Xb} and~\eqref{eq:sde:Ab} can be used with any $\hat b_t(x)$ to estimate expectation over $\rho_t(x)$ via \eqref{eq:expect:b}, we can write the PINN objective on-policy as 
\begin{equation}
        \label{eq:loss:b:F:E}
        L^T_{\text{PINN}} [\hat b,\hat F] = \int_0^T  \frac1{\E [e^{A^{\hat b}_t} ]}\E \Big[e^{A^{\hat b}_t} \big| \nabla \cdot \hat b_t(X^{\hat b}_t) - \nabla U_t(X^{\hat b}_t) \cdot \hat b_t(X^{\hat b}_t)  - \partial_t U_t(X^{\hat b}_t) + \partial_t \hat F_t \big|^2 \Big] dt
 \end{equation}
These expectations can be estimated empirically over a population of solutions to~\eqref{eq:sde:Xb} and~\eqref{eq:sde:Ab}. Crucially, since we can switch from off-policy to on-policy after taking the gradient of the PINN objective, \textit{when computing the gradient of \eqref{eq:loss:b:F:E} over $\hat b_t(x)$,  $(X^{\hat b}_t, A^{\hat b}_t)$ can be considered independent of $\hat b_t(x)$ and do not need to be differentiated over.} In other words, the method does not require backpropagation through the simulation even if used on-policy, i.e. even though it uses the current value of $\hat b_t$ to estimate the loss and its gradient. Finally note that we can use the ODE~\eqref{eq:sde:Ab} for $A^{\hat b}_t$ to write \eqref{eq:loss:b:F:E}  as
\begin{equation}
        \label{eq:loss:b:F:E:2}
        L^T_{\text{PINN}} [\hat b,\hat F] = \int_0^T  \frac1{\E [e^{A^{\hat b}_t} ]}\E \Big[e^{A^{\hat b}_t} \big| \partial_t A^{\hat b}_t + \partial_t \hat F_t \big|^2 \Big] dt 
 \end{equation}
 Since $\E [e^{A^{\hat b}_t} \partial_t A^{\hat b}_t]/\E [e^{A^{\hat b}_t} ] = \partial_t \log \E [e^{A^{\hat b}_t}] = -\partial_t F_t$, \eqref{eq:loss:b:F:E:2} clearly shows that this loss controls the variance of $\partial_t A^{\hat b}_t$, which directly connects the Jarzynski weights to the PINN objective.

Learning $b_t(x)$ and $F_t$ for $t\in[0,1]$ from the start can be challenging if the initial $\hat b_t(x)$ is far from exact and the weights get large variance as $t$ increases. This problem can be alleviated by estimating $b_t(x)$ sequentially. In practice, this amounts to annealing $T$ from a small initial value to $T=1$, in such a way that $b_t(x)$ is learned sufficiently accurately so that  variance of the weights remains small. This variance can be estimated on the fly, which also give us an estimate of the effective sample size (ESS) of the population at all times $t\in [0,1]$. 

Note that we can also employ resampling strategies of the type used in SMC to keep the variance of the weights low~\cite{doucet2001,bolic2004}.

We can proceed similarly with the AM loss~\eqref{eq:loss:phi} by rewriting it as
 \begin{equation}
        \label{eq:loss:phi:emp}
        \begin{aligned}
        L^T_{\text{AM}} [\hat \phi] = \int_0^T \frac{\E \big[ e^{A^{\hat b}_t} \big[\tfrac12 |\nabla \hat \phi_t(X_t^{ \hat b})|^2  +\partial_t \phi_t(X_t^{ \hat b}) \big]\big]}{\E [e^{A^{\hat b}_t} ]} dt
        + \frac{\E  \big[ e^{A^{\hat b}_0}\phi_0(X_0^{\hat b })\big]}{\E [e^{A^{\hat b}_0} ]}  -  \frac{\E \big[e^{A^{\hat b}_T}\phi_T(X_T^{\hat b }) \big ]}{\E [e^{A^{\hat b}_T} ]} .
        \end{aligned}
\end{equation}
These expectations can be estimated empirically over solutions to~\eqref{eq:sde:Xb} and~\eqref{eq:sde:Ab} with $\hat b_t(x) = \nabla \hat \phi_t(x)$. The above implementation is detailed in Algorithm \ref{alg:train}.

\begin{algorithm}[tb]
\caption{Training: Note that for both objectives the resultant set of walkers across time slices $\{x_k^i\}$ are detached from the computational graph when taking a gradient step (\textit{off-policy learning}). }
\begin{algorithmic}[1]
\State \textbf{Initialize:} $n$ walkers, $x_0\sim \rho_0$, $A_0 = 0$, $K$ time steps, model parameters for \{$\hat b_t$, $\hat F_t$\} or $\hat \phi_t$ respectively, diffusion coefficient $\eps_t$, learning rate $\eta$

\Repeat
\State Randomize time grid: $t_0, t_1, \dots, t_K \sim \text{Uniform}(0, T)$, sort such that $t_0 < t_1 < \dots < t_K$
\For{$k=0, \dots, K$}
 \State $\Delta t_k = t_{k+1} - t_k$, 
    \For{each walker $i = 1, \dots, n$}       
        \State $x^i_{t_{k+1}} = x^i_{t_k} - \eps_{t_k} \nabla U_{t_k}(x^i_{t_k}) \Delta t_k + \hat b_{t_k}(x^i_{t_k}) \Delta t_k + \sqrt{2 \eps_{t_k}} (W^i_{t_{k+1}} - W^i_{t_k})$
            \State $A^i_{t_{k+1}} = A^i_{t_k} - \partial_t U_{t_k}(x^i_{t_k}) \Delta t_k - \hat b_{t_k}(x^i_{t_k}) \cdot \nabla U_{t_k}(x^i_{t_k}) \Delta t_k + \nabla \cdot \hat b_{t_k}(x^i_{t_k}) \Delta t_k$
            \EndFor
    \EndFor
    \State Estimate \eqref{eq:loss:b:F:E} or \eqref{eq:loss:phi:emp}, respectively, by replacing the expectation by an empirical average over the $n$ walkers and the time integral by an empirical average over $t_0,\dots, t_K$.
    \State Take gradient descent step to update the model parameters.
\Until{converged}
\end{algorithmic}
\label{alg:train}
\end{algorithm}

\section{Numerical experiments}\label{sec:experiments}
In what follows, we test the NETS method, for both the PINN objective \eqref{eq:loss:b:F} and the action matching objective \eqref{eq:loss:phi}, on standard challenging sampling benchmarks. We then study how the method scales in comparison to baselines, particularly AIS on its own, by testing it on an increasingly high dimensional Gaussian Mixture Models (GMM). Following that, we show that it has orders of magnitude better statistical efficiency as compared to AIS on its own when applied to the study of lattice field theories, even past the phase transition of these theories and in 400 dimensions (an $L=20 \times L=20$ lattice). 

\subsection{40-Mode Gaussian Mixture}
\begin{wraptable}[13]{l}{0.48\textwidth}
\vspace{-0.5cm}
    \centering
    \scriptsize
    \resizebox{0.43\textwidth}{!}{%
        \begin{tabular}{l c c}
            \toprule
            \multicolumn{3}{c}{\textbf{GMM} $(d = 2)$} \\
            \cmidrule(r){2-3}
            \textbf{Algorithm} & \textbf{ESS} $\uparrow$ & $\mathcal{W}_2$ $\downarrow$ \\
            \midrule
            FAB                           & $0.653 \pm 0.017$ & $12.0 \pm 5.73$ \\
            PIS                           & $0.295 \pm 0.018$ & $7.64 \pm 0.92$ \\
            DDS                           & $0.687 \pm 0.208$ & $9.31 \pm 0.82$ \\
            pDEM                          & $0.634 \pm 0.084$ & $12.20 \pm 0.14$ \\
            iDEM                          & $0.734 \pm 0.092$ & $7.42 \pm 3.44$ \\
            CMCD-KL                       & $0.268 \pm 0.069$ & $9.32 \pm 0.71$        \\
            CMCD-LV                       & $0.655 \pm 0.023$ & $4.01 \pm 0.25$        \\
            NETS-AM   $\eps_t = 5$ (ours) & $0.808 \pm 0.031$ & $3.89 \pm 0.22$ \\
            NETS-PINN $\eps_t = 0$ (ours) & $\mathbf{0.954 \pm 0.003}$ & $\mathbf{3.55 \pm 0.57}$  \\
            NETS-PINN $\eps_t = 4$ (ours) & $\mathbf{0.979 \pm 0.002}$ & $\mathbf{3.14 \pm 0.46}$  \\
            NETS-PINN-resample (ours)     & $\mathbf{0.993 \pm 0.004}$ & $\mathbf{3.27 \pm 0.31}$ \\
            \bottomrule
        \end{tabular}
    }
    \caption{Performance of NETS in terms of ESS and $\mathcal W_2$ metrics for 40-mode GMM $(d = 2)$  with comparative results quoted from \cite{akhound2024} for reproducibility.}
    \label{tab:gmm:eval}
\end{wraptable}
A common benchmark for machine learning augmented samplers that originally appeared in the paper introducing Flow Annealed Importance Sampling Bootstrap (FAB) \cite{midgley2023flow} is a 40-mode GMM in 2-dimensions for which the means of the mixture components span from $-40$ to $40$. The high variance and many wells make this problem challenging for re-weighting or locally updating MCMC processes. 
We choose as a time dependent potential $U_t(x)$  that linearly interpolates the means of the GMM with $U_0$ the potential for a standard multivariate Gaussian with standard deviation scale $\sigma = 2$.   %

We train a simple feed-forward neural network of width $256$ against both the PINN objective \eqref{eq:loss:b:F}, parameterizing $(\hat b$, $\hat F)$, or the action matching objective \eqref{eq:loss:phi}, parameterizing $\hat \phi$. 
We compare the learned model from both objectives to recent related literature: FAB,  Path Integral Sampler  (PIS) \citep{zhang2021path}, Denoising Diffusion Sampler (DDS) \citep{vargas2023denoising}, and Denoising Energy Matching (pDEM, iDEM) \citep{akhound2024}. For reproducibility with the benchmarks provided in the latter method, we compute the effective sample size (ESS) estimated from 2000 generated samples as well as the $2-$Wasserstein ($\mathcal W_2$) distance between the model and the target. As noted in \Cref{tab:gmm:eval}, all proposed variants of NETS outperform existing methods. In addition, because our method can be turned into an SMC method by including resampling during the generation, we can push the acceptance rate of the same learned PINN model to nearly 100\% by using a single resampling step when the ESS of the walkers dropped below $98\%$. NETS uses 100 sampling steps and an $\eps_t = 0.0, 4.0$ in the SDE. Note that NETS and CMCD as published use different interpolating potentials on this example based on code conventions, but they could be the same.
\begin{figure}
    \centering
    \includegraphics[width=0.9\linewidth]{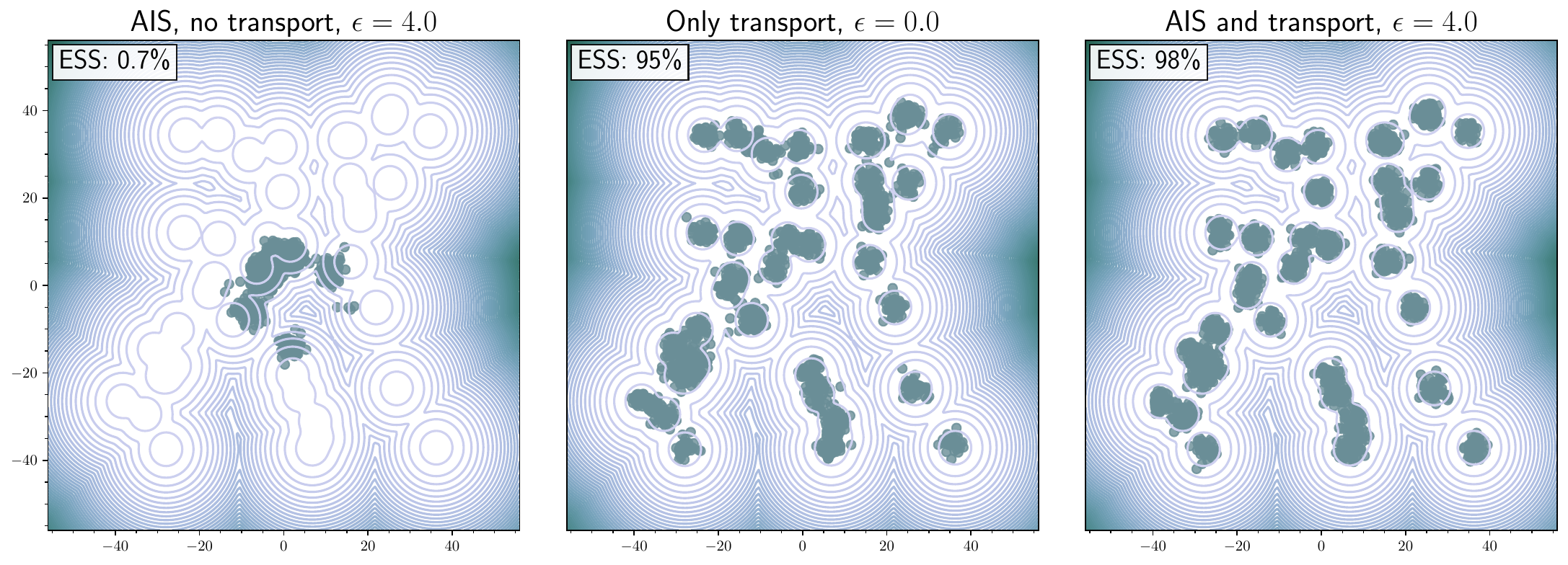}
    \caption{Comparison of the performance of annealed Langevin dynamics alone, transport alone, and annealed Langevin coupled with transport when sampling the 40-mode GMM from \cite{midgley2023flow}.  \textbf{Left}: Annealed Langevin run for 250 steps with $\eps_t = 4.0$, failing to capture the modes with $0\%$ ESS. \textbf{Center}: Learning using the PINN loss and sampling with 100 steps and $\eps_t = 0$ achieves an ESS of $95\%$. \textbf{Right}: Same learning and now sampling with $\eps_t = 4.0$ achieves an ESS of $98\%$.}
    \label{fig:40gmm}
\end{figure}

\subsection{Funnel and Student-t Mixture}
\label{sec:funnel}
We next test NETS on Neal's funnel, a challenging synthetic target distribution which exhibits correlations at different scales across its 10 dimensions, as well as the 50-dimensional Mixture of Student-T (MoS) distribution used in \cite{blessing2024}.  The definitions of the target densities and the interpolating potentials are given in Appendix~\ref{app:num}. Heuristically, the first dimension is Gaussian with variance $\sigma^2 = 9$, and the other 9 dimensions are conditionally Gaussian with variance~$\text{exp}(x_0)$, creating the funnel.

We again parameterize $(\hat b,\hat F)$ or $\hat \phi$ using simple feed forward neural networks, this time of hidden size $512$. We use 100 sampling steps for both, with diffusion coefficients given in the caption of Table \ref{tab:combined}. Following \cite{blessing2024}, we compute the maximum mean discrepancy (MMD) and $\mathcal W_2$ distance between 2000 samples from the model and 2000 samples from the target and compare to related methods in \Cref{tab:combined}. NETS outperforms other methods with both losses on the high dimensional MoS target in both metrics. In addition this can be improved using SMC-style resampling in the interpolation when the ESS drops below $70\%$. NETS matches the best performance in MMD for the Funnel distribution, but it is slightly worse in $\mathcal W_2$.

\begin{table}[h]
    \centering
    \scriptsize
    \setlength{\tabcolsep}{4pt} 
    \resizebox{0.9\textwidth}{!}{%
        \begin{tabular}{l cc cc}
            \toprule
            \textbf{Algorithm} & \multicolumn{2}{c}{\textbf{Funnel $(d=10)$}} & \multicolumn{2}{c}{\textbf{MoS $(d=50)$}} \\
            \cmidrule(lr){2-3} \cmidrule(lr){4-5}
            & \textbf{MMD} $\downarrow$ & $\mathcal{W}_2 \downarrow$ & \textbf{MMD} $\downarrow$ & $\mathcal{W}_2 \downarrow$ \\
            \midrule
            FAB \citep{midgley2023flow}           & $\mathbf{0.032 \pm 0.000}$ &  $153.894 \pm 3.916$  & $0.093 \pm 0.014$          & $1204.160 \pm 147.7$          \\
            GMMVI \citep{arenz2023a}              & $\mathbf{0.031 \pm 0.000}$ &  $\mathbf{105.620 \pm 3.472}$  & $0.135 \pm 0.017$          & $1255.216 \pm 296.9$          \\
            PIS \citep{zhang2022path}             & -- --                      &           --          & $0.218 \pm 0.007$          & $2113.172 \pm 31.17$          \\
            DDS \citep{vargas2023denoising}       & $0.172 \pm 0.031$          &  $142.890 \pm 9.552$  & $0.131 \pm 0.001$          & $2154.884 \pm 3.861$          \\
            AFT \citep{arbel2021}                 & $0.159 \pm 0.010$          &  $145.138 \pm 6.061$  & $0.395 \pm 0.082$          & $2648.410 \pm 301.3$          \\
            CRAFT \citep{arbel2021}               & $0.115 \pm 0.003$          &  $134.335 \pm 0.663$  & $0.257 \pm 0.024$          & $1893.926 \pm 117.3$          \\
            CMCD-KL \citep{vargas2024transport}   & $0.095 \pm 0.003$          &  $513.339 \pm 192.4$  &  -- --      & -- --          \\
        NETS-AM (ours)                        & $0.041 \pm 0.001$          &  $435.793 \pm 96.17$           & $0.0396 \pm 0.001$         &  $\mathbf{407.827 \pm 69.64}$ \\
        NETS-PINN (ours)                      & $\mathbf{0.033 \pm 0.002}$ &  $388.91 \pm  141.5$           & $\mathbf{0.032 \pm 0.001}$ & $\mathbf{482.393 \pm 174.6}$  \\
        NETS-PINN-resample (ours)             & $\mathbf{0.027 \pm 0.003}$ &  $343.78 \pm  65.25$           & $\mathbf{0.030 \pm 0.000}$ & $\mathbf{400.076 \pm 59.31}$  \\
            \bottomrule
        \end{tabular}
    }
    \caption{Performance of NETS on Neal's Funnel and Mixture of Student-T distributions, measured in MMD and $\mathcal{W}_2$ distances from the true distribution. Benchmarking is in accordance with the setup of \cite{blessing2024}. Diffusion coefficient $\epsilon_t = 5, 4$ was used for NETS-AM on the Funnel and MoS, respectively. Equivalently, $\epsilon_t = 5, 5$ were used by NETS-PINN. Bold numbers are within standard deviation the best performing. Note that NETS still has perfect sample in the $\epsilon_t \rightarrow \infty$ limit, but would require finer time discretization than the 100 sampling steps used here (see Figure \ref{fig:w2}).}
    \label{tab:combined}
\end{table}

\subsection{Scaling on high-dimensional GMMs}
\label{sec:gmm}

In order to demonstrate that the method generalizes to high dimension, we study sampling from multimodal GMMs in higher and higher dimensions and observe how the performance scales. In addition, we are curious to understand how the factor in the sampling SDE coming from annealed Langevin dynamics, $\nabla U$, interacts with the learned drift $\hat b$ or $\nabla \hat \phi$ as we change the diffusivity. We construct $8$-mode target GMMs in $d = 36, 64, 128, 200$ dimensions and learn $\hat b$ with the PINN loss in each scenario. 
\begin{figure}[h!]
    \centering
    \includegraphics[width=0.9\linewidth]{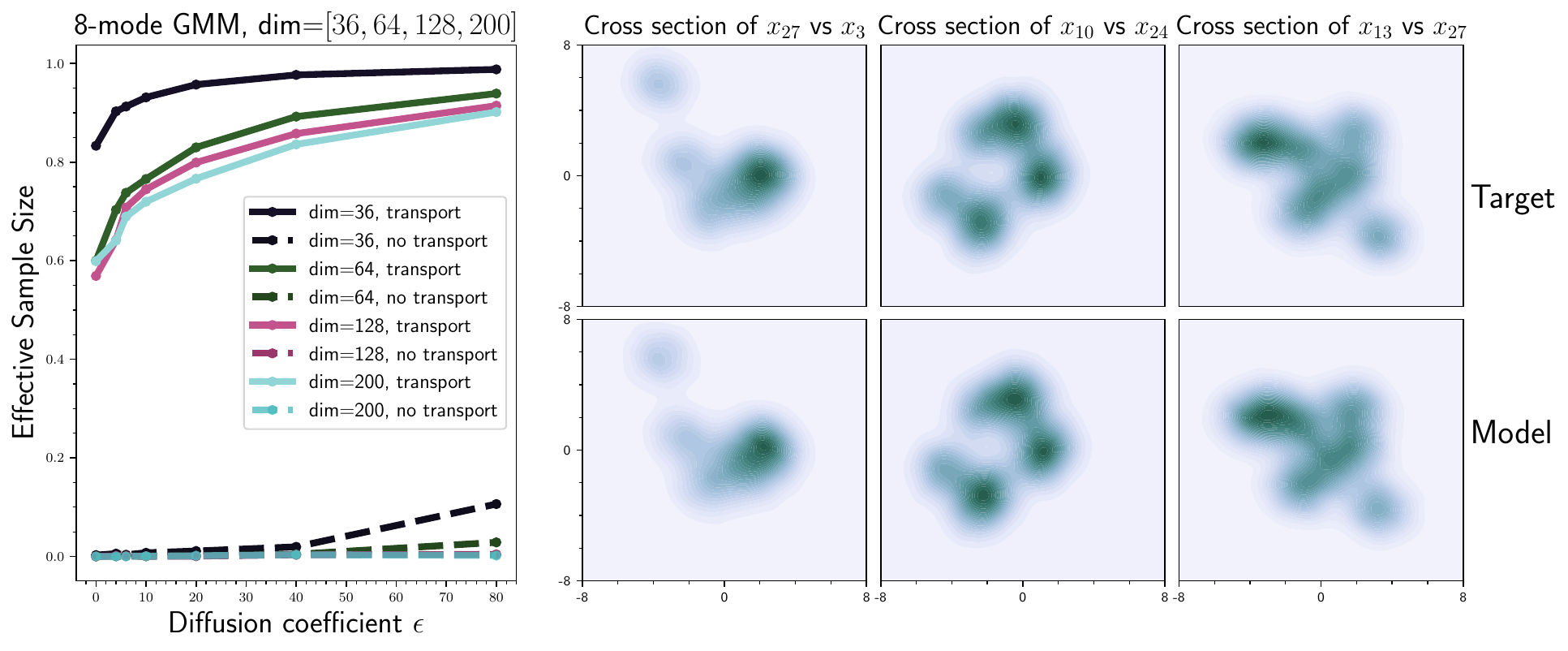}
    \caption{Demonstration of high-dimensional sampling with our method using the PINN loss in \eqref{eq:loss:b:F} and a study of how diffusivity impacts performance, with and without transport. \textbf{Left}: NETS can achieve high ESS through transport alone, and the effect of increased \textit{diffusivity has more of a positive effect on performance with sampling than without}. AIS cannot achieve ESS above $\approx 0$ in high dimension. \textbf{Right}: Kernel density estimates of 2-$d$ cross sections of the high-dimensional, multimodal distribution arising from the model and ground truth. }
    \label{fig:gmm:scale}
\end{figure}
We use the same feed forward neural network of width $512$ and depth $4$ to parameterize both $\hat b$ and $\hat F$ for all dimensions tested and train for $4000$ training iterations. Figure \ref{fig:gmm:scale} summarizes the results. On the left plot, we note that AIS on its own cannot produce any effective samples, while even in 200 dimensions, NETS works with transport alone with $60\%$ ESS. As we increase the diffusivity~$\eps_t$ and therefore the effect of the Langevin term coming from the gradient of the potential, we note that all methods converge to nearly independent sampling, and the discrepancy in performance across dimensions is diminished. Note that the caveat to achieve this is that the step size in the SDE integrator must be taken smaller to accommodate the increased diffusivity, especially for the $\eps_t= 80$ data point. The number of sampling steps used to discretize the SDEs in these experiments ranged from $K = 100$ for $\eps_t =0$ up to $K = 2000$ for $\eps_t = 80$. Nonetheless, it suggests that diffusion can be more helpful when there is already some successful transport than without.

\subsection{Lattice $\varphi^4$ theory}
\label{sec:phi4}

We next apply NETS to the simulation of a statistical lattice field theory at and past the phase transition from which the lattice goes from disordered, to semi-ordered, to fully ordered (neighboring sites are highly correlated to be of the same sign and magnitude). We study the lattice $\varphi^4$ theory in $D=2$ spacetime dimensions. The random variables in this circumstance are field configurations $\varphi \in \mathbb R^{L\times L}$, where $L$ is the extent of space and time.  The interpolating energy function under which we seek to sample is defined as:
\begin{align}
\label{eq:U:lat}
    U_{t}(\varphi) = \sum_x \Big[- 2 \sum_{\mu}\varphi_x \varphi_{x + \mu} \Big]  + (2D +  m^2_t) \varphi_x^2 +  \lambda_t \varphi_x^4, 
\end{align}
where summation over $x$ indicates summation over the lattices sites, and $m^2_t$ and $\lambda_t$ are time-dependent parameters of the theory that define the phase of the lattice (ranging from disordered to ordered, otherwise known as magnetized). A derivation of this energy function is given in Appendix \ref{app:phi4}. Importantly, sampling the lattice configurations becomes challenging when approaching the phase transition between the disordered and ordered phases. 
\begin{figure}
    \centering
    \includegraphics[width=0.95\linewidth]{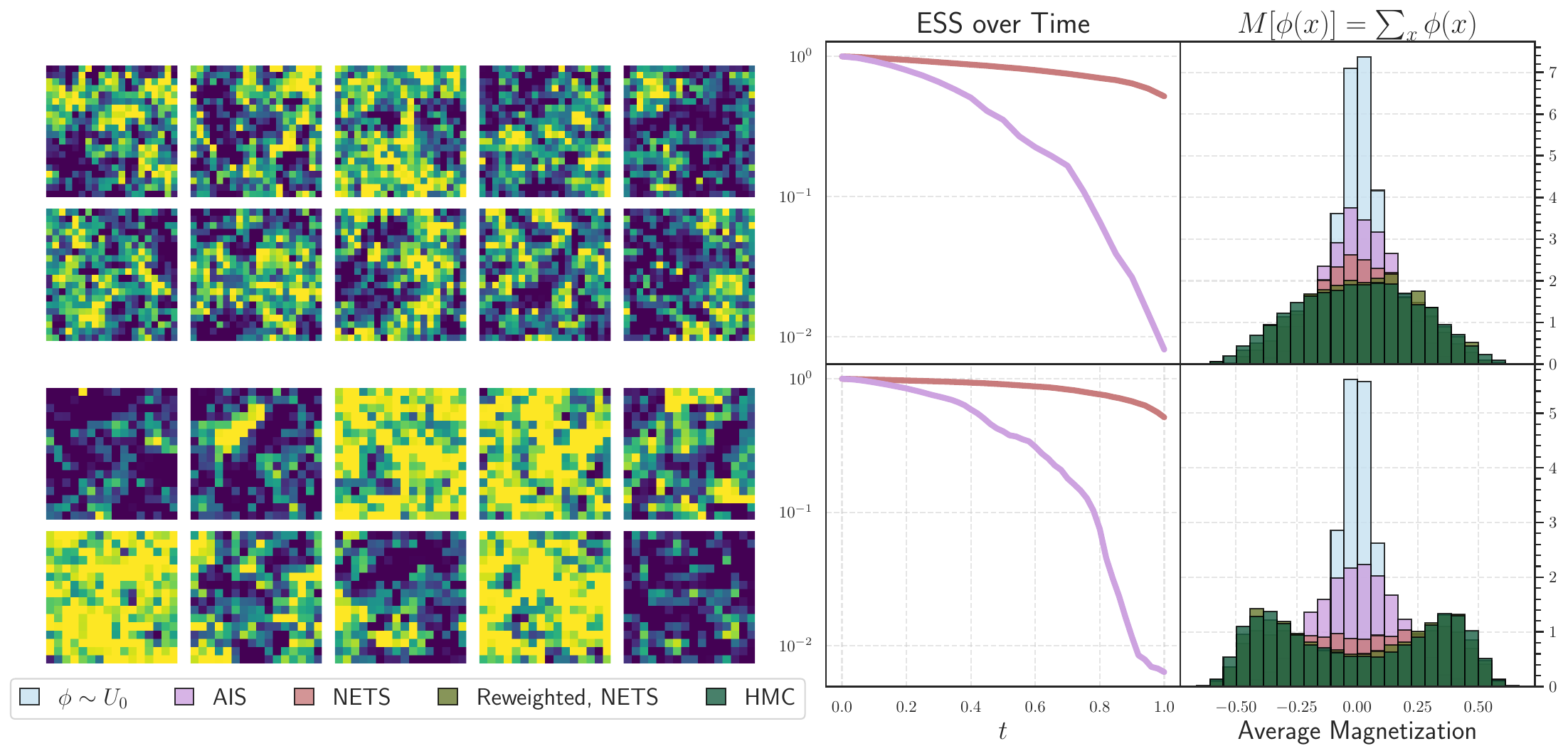}
    \caption{Comparison of the performance of NETS to AIS on two different settings for the study of $\varphi^4$ theory. \textbf{Top row, left}: 10 example generative lattice configurations with parameters $L=20$, $m^2 = -1.0$, $\lambda = 0.9$, which demarcates the phase transition to the antiferromagnetic phase. \textbf{Top row, right}: Performance of AIS (purple curve) vs. NETS (red curve) in terms of effective sample size over time of integration~$t$, and a histogram of the average magnetization of $4000$ lattice configurations, sampled with AIS, NETS, and HMC (superposed in this order). Note that NETS is closer to the HMC target and re-weights correctly. Re-weighted AIS was not plotted because the weights were too high variance. \textbf{Bottom row}: Equivalent setup for $L=16$, $m^2 = -1.0$, $\lambda = 0.8$, past the phase transition and into the ordered phase. Note that the field configurations generated by NETS are either all positive across lattice sites or all negative. AIS fails to sample the correct distribution, and its weights are too high variance to be used on the histogram.}
    \label{fig:phi4}
\end{figure}
As an example, we identify the phase transition on $L=16$ ($d = 256$) and $L=20$ ($d = 400$) lattices and run NETS with the action matching loss, with $\hat \phi_t$ a simple feed forward neural network. We use the free theory $\lambda_0 = 0$ as the base distribution under which we initially draw samples. The definition of the target parameter values  $m^2_1, \lambda_1$ both at the phase transition and in the ordered phase are given in the Appendix \ref{app:phi4}. In Figure \ref{fig:phi4}, the top row shows samples from NETS for $L=20$ at the phase transition, where correlations begin to appear in the lattice configurations. NETS is almost 2 orders of magnitude more statistically efficient than AIS (the same setup without the transport) in sampling at the critical point, as seen in the plot showing ESS over time. Note also that NETS can produce unbiased estimates of the magnetization as compared to a Hybrid Monte Carlo (HMC) ground truth. The bottom row shows samples past the phase transition and into the ordered phase, where the lattices begin to take on either all positive or all negative values. Again in this regime, NETS is nearly 2 orders of magnitude more statistically efficient.

While NETS performs significantly better than conventional annealed samplers on the challenging field theory problem, algorithms built out of dynamical transport still experience slowdowns near phase transitions because of the difficulty of resolving the dynamics of the integrators near these critical points. As such, we need to use 1500-2000 steps in the integrator to properly resolve the dynamics of the SDE.

\section*{Acknowledgements}
We thank Francisco Vargas, Lee Cheuk-Kit, and Hugo Cui for helpful discussions on related work, as well as Ryan Abbott for helpful clarification on the free-field theory. MSA is supported by a Junior Fellowship from the Harvard Society of Fellows and  the ONR project under the Vannevar Bush award ``Mathematical Foundations and Scientific Applications of Machine Learning”. EVE is supported by the National Science Foundation under Awards DMR1420073, DMS-2012510, and DMS-2134216, by the Simons Collaboration on Wave Turbulence, Grant No. 617006, and by a Vannevar Bush Faculty Fellowship.

\bibliography{references}
\bibliographystyle{iclr2025_conference}


\section{Appendix}

\subsection{Proofs of Sec.~\ref{sec:methods}}
\label{app:proofs}

Here we provide the proofs of the statements made in Sec.~\ref{sec:methods} which, for the reader convenience, we recall.

\jar*

\begin{proof}
Let  $f_t(x,a)$ with $(x,a)\in \R^{d+1}$  be the PDF of the joint process $(X_t,A_t)$ defined by the SDE~\eqref{eq:sde:X} and \eqref{eq:sde:A}. This PDF solves the FPE
\begin{equation}
    \label{eq:fpe:f:app}
    \partial_t f_t = \eps_t \nabla \cdot (\nabla U_t f_t + \nabla f_t) + \partial_t U_t \partial_a f_t, \qquad f_{t=0}(x,a) = \delta(a) \rho_0(x).
\end{equation}
Define
\begin{equation}
    \label{eq:rt:app}
    g_t(x) = \int_{\R} e^a f_t(x,a) da.
\end{equation}
We can derive an equation for $g_t(x)$ by multiplying both sides of the FPE~\eqref{eq:fpe:f:app} by $e^a$ and integrating over $a\in \R$. Using
\begin{equation}
    \label{eq:3steps:app}
    \begin{aligned}
        \int_\R e^a \partial_t f_t(x,a) da & =  \partial_t \int_\R e^a f_t(x,a) da = \partial_t g_t,\\
        \int_\R e^a \eps_t \nabla \cdot (\nabla U_t f_t + \nabla f_t) da  & =  \eps_t \nabla \cdot \Big (\nabla U_t \int_\R e^a f_t(x,a) da + \nabla \int_\R e^a f_t(x,a) da\Big) \\
        & =\eps_t \nabla \cdot (\nabla U_t g_t + \nabla g_t), \\
        \int_\R e^a \partial_t U_t \partial_a f_t da &= \partial_t U_t \int_\R e^a  \partial_a f_t da \\
        & = - \partial_t U_t \int_\R e^a  f_t da = - \partial_t U_t g_t,
    \end{aligned}
\end{equation}
where we arrived at the second equality in the third equation by integration by parts, we deduce that 
\begin{equation}
    \label{eq:fpe:g:app}
    \partial_t g_t = \eps_t \nabla \cdot (\nabla U_t g_t + \nabla g_t) - \partial_t U_t g_t, \qquad g_{t=0} (x) = \rho_0(x) = e^{-U_0(x)+F_0}.
\end{equation}
The solution to this parabolic PDE is unique and it can be checked by direct substitution that it is given by
\begin{equation}
    \label{eq:g:sol}
    g_t(x) = e^{-U_t(x)+ F_0}.
\end{equation}
Note that this solution is not normalized since it contains $F_0$ rather than $F_t$. In fact it is easy to see that
\begin{equation}
    \label{eq:g:sol:int}
    \int_{\R^d} g_t(x) dx = \int_{R^{d+1}} e^a f_t(x,a) dx da = e^{-F_t+ F_0},
\end{equation}
where the first equality follows from the definition of $g_t$ and the second from its explicit expression and the definition of the free energy that implies $\int_{\R^d} e^{-U_t(x)} dx = e^{-F_t}$. Equation~\eqref{eq:g:sol} is the second equation in \eqref{eq:expect:0}. From \eqref{eq:g:sol} we also deduce that, given any test function $h:\R^{d} \to \R$, we have
\begin{equation}
    \label{eq:h:exp}
    \begin{aligned}
    \frac{\int_{R^{d+1}} e^a h(x) f_t(x,a) dx da }{\int_{R^{d+1}} e^a f_t(x,a) dx da } &=  \frac{\int_{R^{d}}  h(x) g_t(x) dx }{\int_{R^{d}} g_t(x) dx}\\
    &= \frac{\int_{R^{d}}  h(x) e^{-U_t(x)+F_0} dx }{\int_{R^{d}} e^{-U_t(x)+F_0}}\\
    &= \frac{\int_{R^{d}}  h(x) e^{-U_t(x)} dx }{\int_{R^{d}} e^{-U_t(x)}dx}\\\
    &= e^{F_t} \int_{R^{d}}  h(x) e^{-U_t(x)} dx\\
    &= \int_{R^{d}}  h(x) \rho_t(x) dx.
    \end{aligned}
\end{equation}
Since by definition of $f_t(x,a)$ the left hand-side of this equation can be expressed as the ratio of expectations over $(X_t,A_t)$ in the first equation in \eqref{eq:expect:0} we are done.
\end{proof}

\perfect*

\begin{proof}
If $b_t$ satisfies~\eqref{eq:b}, then $\rho_t$ satisfies the FPE~\eqref{eq:fpe:bb}. Since \eqref{eq:sde:Xbe} is the SDE associated with this FPE, \eqref{eq:expect:be} holds.
\end{proof}

\nets*

\begin{proof}
We can follow the same steps as in the proof of Proposition~\ref{prop:jar} by considering the PDF $f^{\hat b}_t(x,a)$ of $(X_t^{\hat b}, A^{\hat b}_t)$. This PDF solves the FPE
\begin{equation}
    \label{eq:fpe:f:app:hb}
    \begin{aligned}
        &\partial_t f^{\hat b}_t = \eps_t \nabla \cdot (\nabla U_t f^{\hat b}_t + \nabla f^{\hat b}_t) -\nabla \cdot (\hat b_t f_t^{\hat b}) - (\nabla \cdot \hat b_t - \nabla U_t- \partial_t U_t) \partial_a f^{\hat b}_t, \\
        &f^{\hat b}_{t=0}(x,a) = \delta(a) \rho_0(x).
    \end{aligned}
\end{equation}
Define
\begin{equation}
    \label{eq:rt:app:hb}
    g^{\hat b}_t(x) = \int_{\R} e^a f^{\hat b}_t(x,a) da.
\end{equation}
We can derive an equation for $g^{\hat b}_t(x)$ by multiplying both sides of the FPE~\eqref{eq:fpe:f:app:hb} by $e^a$ and integrating over $a\in \R$. Using 
\begin{equation}
    \label{eq:3steps:app:hb}
    \begin{aligned}
        \int_\R e^a \partial_t f^{\hat b}_t da & =  \partial_t \int_\R e^a f^{\hat b}_t da = \partial_t g^{\hat b}_t,\\
        \int_\R e^a \eps_t \nabla \cdot (\nabla U_t f^{\hat b}_t + \nabla f^{\hat b}_t) da  & =  \eps_t \nabla \cdot \Big (\nabla U_t \int_\R e^a f^{\hat b}_t da + \nabla \int_\R e^a f^{\hat b}_t da\Big), \\
        -\int_\R e^a \nabla \cdot (\hat b_t f^{\hat b}_t) da  & =  - \nabla \cdot \Big (\hat b_t \int_\R e^a f^{\hat b}_t da\Big) \\
        & =-\nabla \cdot (\hat b_t g^{\hat b}_t), \\
        -\int_\R e^a (\nabla \cdot \hat b_t - \nabla U_t- \partial_t U_t) \partial_a f^{\hat b}_t da &= -(\nabla \cdot \hat b_t - \nabla U_t- \partial_t U_t) \int_\R e^a  \partial_a f^{\hat b}_t da \\
        & =  (\nabla \cdot \hat b_t - \nabla U_t- \partial_t U_t) \int_\R e^a  f^{\hat b}_t da \\
        &= (\nabla \cdot \hat b_t - \nabla U_t- \partial_t U_t)  g^{\hat b}_t,
    \end{aligned}
\end{equation}
where we arrived at the second equality in the fourth equation by integration by parts, we deduce that 
\begin{equation}
    \label{eq:fpe:g:app:hb}
    \begin{aligned}
    &\partial_t g^{\hat b}_t = \eps_t \nabla \cdot (\nabla U_t g^{\hat b}_t + \nabla g^{\hat b}_t) - \nabla \cdot(\hat b_t g^{\hat b}_t ) + (\nabla \cdot \hat b_t - \nabla U_t- \partial_t U_t)  g^{\hat b}_t,\\
    & g^{\hat b}_{t=0} (x) = \rho_0(x) = e^{-U_0(x)+F_0}.
    \end{aligned}
\end{equation}
The solution to this parabolic PDE is unique and it can be checked by direct substitution that it is given by
\begin{equation}
    \label{eq:g:sol:hb}
    g^{\hat b}_t(x) = e^{-U_t(x)+ F_0}.
\end{equation}
This solution is not normalized since it contains $F_0$ rather than $F_t$, and it is easy to see that
\begin{equation}
    \label{eq:g:sol:int:hb}
    \int_{\R^d} g^{\hat b}_t(x) dx = \int_{R^{d+1}} e^a f^{\hat b}_t(x,a) dx da = e^{-F_t+ F_0}.
\end{equation}
where the first equality follows from the definition of $g^{\hat b}_t$ and the second from its explicit expression and the definition of the free energy that implies $\int_{\R^d} e^{-U_t(x)} dx = e^{-F_t}$. Equation~\eqref{eq:g:sol:int:hb} is the second equation in~\eqref{eq:expect:b}. From \eqref{eq:g:sol:hb} we also deduce that, given any test function $h:\R^{d} \to \R$, we have
\begin{equation}
    \label{eq:h:exp:hb}
    \begin{aligned}
    \frac{\int_{\R^{d+1}} e^a h(x) f^{\hat b}_t(x,a) dx da }{\int_{\R^{d+1}} e^a f^{\hat b}_t(x,a) dx da } &=  \frac{\int_{\R^{d}}  h(x) g^{\hat b}_t(x) dx }{\int_{\R^{d}} g^{\hat b}_t(x) dx}\\
    &= \frac{\int_{\R^{d}}  h(x) e^{-U_t(x)+F_0} dx }{\int_{\R^{d}} e^{-U_t(x)+F_0}}\\
    &= \frac{\int_{\R^{d}}  h(x) e^{-U_t(x)} dx }{\int_{\R^{d}} e^{-U_t(x)}dx}\\\
    &= e^{F_t} \int_{\R^{d}}  h(x) e^{-U_t(x)} dx\\
    &= \int_{\R^{d}}  h(x) \rho_t(x) dx.
    \end{aligned}
\end{equation}
Since by definition of $f^{\hat b}_t(x,a)$ the left hand-side of this equation can be expressed as the ratio of expectations over $(X^{\hat b}_t,A^{\hat b}_t)$ in the first equation in~\eqref{eq:expect:b} we are done.
\end{proof}

\pinn*

\begin{proof}
Clearly the minimum value of \eqref{eq:loss:b:F} is zero and the minimizing pair $(\hat b,\hat F)$ must satisfy
\begin{equation}
    \label{eq:loss:b:F:el}
    \nabla \cdot \hat b_t - \nabla U_t \cdot \hat b_t  - \partial_t U_t+ \partial_t \hat F_t =0
\end{equation}
By multiplying both sides of this equation by $\rho_t$ is can be written as
\begin{equation}
    \label{eq:loss:b:F:el2}
    \nabla \cdot (\hat b_t \rho_t)- \partial_t U_t\rho_t + \partial_t \hat F_t \rho_t=0
\end{equation}
This equation requires a solvability condition obtained by integrating it over $\R^d$. This gives
\begin{equation}
    \label{eq:loss:b:F:el3}
    -\int_{\R^d} \partial_t U_t(x)\rho_t(x) dx + \partial_t \hat F_t =0,
\end{equation}
which, by~\eqref{eq:dtF}, implies that $\partial_t \hat F_t= \partial_t F_t$. In turn, this implies that 
\eqref{eq:loss:b:F:el2} is equivalent to~\eqref{eq:b}, i.e. $\hat b_t$ solves~\eqref{eq:b}.
\end{proof}

\kl*

\begin{proof} 
Consider 
\begin{equation}
    \label{eq:KL:t}
    D_{\text{KL}}(\hat\rho_{t}||\rho_t) = \int_{\R^d} \log \left( \frac{\hat \rho_t(x)}{\rho_t(x)}\right) \hat \rho_t(x) dx
\end{equation}
where $\hat \rho_t$ satisfies \eqref{eq:rho:b}. Taking the time-derivative of this expression we deduce that (using \eqref{eq:rho:b}, $\rho_t(x) = e^{-U_t(x)+F_t}$, the identity  $\epsilon_t \nabla \cdot (\nabla U_t \hat \rho_t + \nabla \hat \rho_t) = \epsilon_t \nabla \cdot (\rho_t \nabla (\hat \rho_t/\rho_t))$, and multiple integrations by parts)
\begin{equation}
    \label{eq:KL:dt}
    \begin{aligned}
        \partial_t D_{\text{KL}}(\hat\rho_{t}||\rho_t) & = \int_{\R^d} \left[\log \left( \frac{\hat \rho_t(x)}{\rho_t(x)}\right) \partial_t \hat \rho_t(x) -\frac{\partial_t \rho_t(x)}{\rho_t(x)} \hat \rho_t(x)\right] dx\\
        & = \int_{\R^d} \log \left( \frac{\hat \rho_t(x)}{\rho_t(x)}\right) \cdot\left(-\nabla \cdot(\hat b_t(x) \hat \rho_t(x)) +\epsilon_t \nabla \cdot (\nabla U_t(x) \hat \rho_t(x) + \nabla \hat \rho_t(x) )\right) dx \\
        & \quad + \int_{\R^d}(\partial_t  U_t(x) - \partial_t F_t) \hat \rho_t(x) dx
        \\ 
        & = \int_{\R^d} \left[\hat b_t(x)\cdot \nabla \log \left( \frac{\hat \rho_t(x)}{\rho_t(x)}\right) +\partial_t  U_t - \partial_t F_t \right] \hat \rho_t(x) dx\\
        & \quad - \epsilon_t \int_{\R^d} \nabla \log \left( \frac{\hat \rho_t(x)}{\rho_t(x)}\right) \cdot \nabla \left(\frac{\hat \rho_t(x)}{\rho_t(x)} \right) \rho_t(x) dx\\
        & = \int_{\R^d} \left[\hat b_t(x)\cdot \nabla \hat \rho_t(x) + \left(\hat b_t(x) \cdot \nabla U_t(x)  +\partial_t  U_t - \partial_t F_t\right)\hat\rho_t(x) \right]  dx\\
        & \quad - \epsilon_t \int_{\R^d} \left|\nabla \log \left( \frac{\hat \rho_t(x)}{\rho_t(x)}\right) \right|^2\hat\rho_t(x) dx\\
        & = \int_{\R^d} \left[-\nabla \cdot \hat b_t(x) + \hat b_t(x) \cdot \nabla U_t(x)  +\partial_t  U_t - \partial_t F_t\right]  \hat\rho_t(x) dx\\
        & \quad - \epsilon_t \int_{\R^d} \left|\nabla \log \left( \frac{\hat \rho_t(x)}{\rho_t(x)}\right) \right|^2\hat\rho_t(x) dx.
    \end{aligned}
\end{equation}
Therefore
\begin{equation}
    \label{eq:KL:dt:2}
    \begin{aligned}
        D_{\text{KL}}(\hat\rho_{t=1}||\rho_1) &= \int_0^1\int_{\R^d} \left[-\nabla \cdot \hat b_t(x) + \hat b_t(x) \cdot \nabla U_t(x)  +\partial_t  U_t - \partial_t F_t\right] \hat \rho_t(x) dx dt\\
        & \quad - \int_0^1 \epsilon_t \int_{\R^d} \left|\nabla \log \left( \frac{\hat \rho_t(x)}{\rho_t(x)}\right) \right|^2\hat\rho_t(x) dx dt \\
        & \le \int_0^1\int_{\R^d} \left[-\nabla \cdot \hat b_t(x) + \hat b_t(x) \cdot \nabla U_t(x)  +\partial_t  U_t - \partial_t F_t\right] \hat \rho_t(x) dx dt\\
        &\le \left[\int_0^1\int_{\R^d} \left|-\nabla \cdot \hat b_t(x) + \hat b_t(x) \cdot \nabla U_t(x)  +\partial_t  U_t - \partial_t F_t\right|^2  \hat\rho_t(x) dx dt\right]^{1/2}\\
        & =  \sqrt{L^{T=1}_{\text{PINN}}(\hat b, F)}
    \end{aligned}
\end{equation}
which gives~\eqref{eq:KL}. To establish~\eqref{eq:KL:2} observe that
\begin{equation}
    \label{eq:Pinn:KL}
    \begin{aligned}
        &L^{1}_{\text{PINN}}(\hat b, F)\\
        & =\int_0^1\int_{\R^d} \left|\nabla \cdot \hat b_t(x) - \hat b_t(x) \cdot \nabla U_t(x)  -\partial_t  U_t + \partial_t F_t\right|^2  \hat\rho_t(x) dxdt\\
        & \le 2 \int_0^1\int_{\R^d} \left[\left|\nabla \cdot \hat b_t(x) - \hat b_t(x) \cdot \nabla U_t(x) -\partial_t  U_t + \partial_t \hat F_t \right|^2  + \left|\partial_t  F_t-\partial_t  \hat F_t \right|^2\right] \hat\rho_t(x) dx dt  \\
        & = 2 L^{1}_{\text{PINN}}(\hat b, \hat F) + 2 \int_0^1|\partial_t  F_t-\partial_t  \hat F_t |^2dt 
    \end{aligned}
\end{equation}
Therefore, if $\int_0^1|\partial_t  \hat F_t-\partial_t  F_t |^2dt \le \delta$, we have
\begin{equation}
    \label{eq:Pinn:KL:2}
    \begin{aligned}
        &L^{1}_{\text{PINN}}(\hat b, F)\le 2 L^{1}_{\text{PINN}}(\hat b, \hat F) + 2 \delta
    \end{aligned}
\end{equation}
Combining this bound with \eqref{eq:KL} gives~\eqref{eq:KL:2}.
\end{proof}

\am*

\begin{proof}
By integrating by parts in time the term involving $\partial_t \phi_t$ in the AM objective~\eqref{eq:loss:phi:2}, we can express is as 
\begin{equation}
    \label{eq:loss:phi:2}
    L^T_{AM} [\hat \phi] = \int_0^T \int_{\R^d}\big[ \tfrac12 |\nabla \hat \phi_t(x)|^2 \rho_t(x)  - \phi_t(x) \partial_t\rho_t(x)\big] dx dt.
\end{equation}
This is a convex objective in $\hat\phi$ whose minimizers satisfy
\begin{equation}
    \label{eq:loss:phi:3}
    \nabla \cdot(\nabla \hat \phi_t \rho_t ) = - \partial_t\rho_t.
\end{equation}
This is~\eqref{eq:b} written in terms of $b_t(x)=\nabla \phi_t(x)$. The solution of this equation is unique up to a constant  by the Fredholm alternative since its right hand-side satisfies the solvability condition~$\int_{\R^d} \partial_t \rho_t(x) dx =0$.
\end{proof}

\paragraph{Derivation of \eqref{eq:A:new}.} If $\hat b_t(x) = \nabla \hat \phi_t(x)$, the SDEs~\eqref{eq:sde:Xb} and \eqref{eq:sde:Ab} reduce to
\begin{alignat}{2}
 \label{eq:sde:Xb:app}
	dX^{\hat b}_t &= -\eps_t \nabla U_t(X^{\hat b}_t ) dt + \hat \nabla\phi_t(X^{\hat b}_t)dt + \sqrt{2 \eps_t} dW_t,  \qquad  && \hat X^{\hat b}_0 \sim \rho_0, \\
  \label{eq:sde:Ab:app}
	d A^{\hat b}_t &= \Delta\hat \phi_t(X^{\hat b}_t )dt- \nabla U_t(X^{\hat b}_t ) \cdot \nabla \hat \phi(X^{\hat b}_t) dt -\partial_t U_t(X^{\hat b}_t)dt,  \qquad && A^{\hat b}_0 = 0,
\end{alignat}
Since  by It\^o formula we have
\begin{equation}
    \label{eq:dphi:ito}
    \begin{aligned}
    d\hat \phi_t(X^{\hat b}_t) &= \partial_t \hat \phi_t(X^{\hat b}_t) dt -\eps_t \nabla \hat \phi_t(X^{\hat b}_t) \cdot \nabla U_t(X^{\hat b}_t ) dt + |\nabla \hat \phi_t(X^{\hat b}_t) |^2dt\\
    & + \sqrt{2 \eps_t} \nabla \hat \phi_t(X^{\hat b}_t)  \cdot dW_t + \eps_t \Delta \hat\phi_t (X^{\hat b}_t)dt,  
    \end{aligned}
\end{equation}
we can express 
\begin{equation}
    \label{eq:dphi:ito:2}
    \begin{aligned}
    \Delta \hat\phi_t (X^{\hat b}_t)dt &= \frac1{\eps_t} d\hat \phi_t(X^{\hat b}_t)dt -\frac1{\eps_t} \partial_t \hat \phi_t(X^{\hat b}_t) dt  + \nabla \hat \phi_t(X^{\hat b}_t) \cdot \nabla U_t(X^{\hat b}_t ) dt \\ &-\frac1{\eps_t} |\nabla \hat \phi_t(X^{\hat b}_t) |^2dt - \sqrt{\frac{2}{\eps_t}} \nabla \hat \phi_t(X^{\hat b}_t)  \cdot dW_t.
    \end{aligned}
\end{equation}
If we insert this expression in the SDE~\eqref{eq:sde:Ab:app}, we can write it as
\begin{equation}
    \label{eq:At:app}
		dA^{\hat b}_t = \frac1{\eps_t} d\hat \phi_t(X^{\hat b}_t)dt +dB_t.
\end{equation}
where $dB_t$ is given by~\eqref{eq:Bt}. Integrating~\eqref{eq:At:app} gives~\eqref{eq:A:new}.

\subsection{Time-discretized version of Proposition~\ref{prop:nets}}
\label{app:discrete}

Here we show how to generalize the result in Proposition~\ref{prop:nets} if we time discretize the SDE in~\eqref{eq:sde:Xb} using Euler-Marayuma scheme and use some suitable time-discretized version of  the ODE~\eqref{eq:sde:Ab}.

\begin{proposition}
    \label{prop:discrete}
    Let $0=t_0<t_1<\cdots < t_K = 1$ be a time grid on $[0,1]$, denote $\mathit{\Delta} t_k = t_{k+1} - t_k$ for $k=0,\ldots,K-1$, set $\tilde X_0^{\hat b} \sim \rho_0$ and $\tilde A^{\hat b }_0 =0$, and for $k=0,\ldots,K-1$ define  $\tilde X_{t_{k+1}}^{\hat b},\tilde A^{\hat b }_{t_{k+1}}$ recursively via
    \begin{align}
    \label{eq:Xb:d}
    \tilde X^{\hat b}_{t_{k+1}} &= \tilde X^{\hat b}_{t_{k}} - \eps_{t_k} \nabla U_{t_k}(\tilde X^{\hat b}_{t_k}) \Delta t_k + \hat b_{t_k}(\tilde X^{\hat b}_{t_k}) \Delta t_k + \sqrt{2 \eps_{t_k}} ( W_{t_{k+1}}-W_{t_k}),\\
    \label{eq:Ab:d}
    \tilde A^{\hat b }_{t_{k+1}} &= \tilde A^{\hat b }_{t_k} + U_{t_k}(\hat X^{\hat b}_{t_k}) - U_{t_{k+1}}(\hat X^{\hat b}_{t_{k+1}})+ R^+_{k}(\tilde X^{\hat b}_{t_k},\tilde X^{\hat b}_{t_{k+1}}) - R^-_{k}(\tilde X^{\hat b}_{t_{k+1}},\tilde X^{\hat b}_{t_k}),
    \end{align}
    where we defined
\begin{equation}
    R^\pm_{k}(x,y) = \frac1{4\eps_{t_k}\mathit{\Delta}t_k}\big|y-x+\mathit{\Delta}t_k (\eps_{t_k} \nabla U_{t_k}(x)\mp b_{t_k}(x))\big|^2
\end{equation}
Then for all $k=0,\ldots, K$ and any test function $h:\R^d\to\R$, we have 
\begin{equation}
    \label{eq:expect:b:d}
		\int_{\R^d} h(x) \rho_{t_k}(x) dx = \frac{\mathbb E[ e^{\tilde A^{\hat b}_{t_k}} h(\tilde X^{\hat b}_{t_k})]}{\mathbb E[e^{\tilde A^{\hat b}_{t_k}}]}, \qquad Z_{t_k} = e^{-F_{t_k}} = \E[e^{\tilde A^{\hat b}_{t_k}}],
	\end{equation}
 where the expectations are taken over the law of $(\tilde X^{\hat b}_{t_k},\tilde A^{\hat b}_{t_k})$
\end{proposition}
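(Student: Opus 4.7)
My plan is to recognize the discrete-time update for $\tilde A^{\hat b}_{t_k}$ as (up to $U_{t_0}(\tilde X^{\hat b}_{t_0}) - U_{t_k}(\tilde X^{\hat b}_{t_k})$) a telescoping sum of log Radon--Nikodym derivatives between forward and ``reverse'' Euler--Maruyama transition kernels, which is the standard AIS/SMC bookkeeping. The identity $Z_{t_k}/Z_0 = \E[e^{\tilde A^{\hat b}_{t_k}}]$ and the ratio formula \eqref{eq:expect:b:d} will then follow from a short change-of-measure computation on the full path space.

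Concretely, first I would identify the Euler--Maruyama one-step transition density for~\eqref{eq:Xb:d} as
\begin{equation}
p^{\rightarrow}_k(y\mid x) = (4\pi\eps_{t_k}\mathit{\Delta}t_k)^{-d/2}\,\exp\bigl(-R^{+}_{k}(x,y)\bigr),
\end{equation}
and the ``reverse'' Gaussian kernel (in $x_j$ given $x_{j+1}$), with the opposite sign on $\hat b$,
\begin{equation}
p^{\leftarrow}_k(x\mid y) = (4\pi\eps_{t_k}\mathit{\Delta}t_k)^{-d/2}\,\exp\bigl(-R^{-}_{k}(y,x)\bigr).
\end{equation}
The key point is that the two normalising constants coincide, so $\exp(R^{+}_{k}-R^{-}_{k}) = p^{\leftarrow}_k/p^{\rightarrow}_k$. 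Writing out $\tilde A^{\hat b}_{t_k}$ from the recursion~\eqref{eq:Ab:d} as a telescoping sum therefore gives
\begin{equation}
\tilde A^{\hat b}_{t_k} = U_{t_0}(\tilde X^{\hat b}_{t_0})-U_{t_k}(\tilde X^{\hat b}_{t_k}) + \sum_{j=0}^{k-1}\log\frac{p^{\leftarrow}_j(\tilde X^{\hat b}_{t_j}\mid \tilde X^{\hat b}_{t_{j+1}})}{p^{\rightarrow}_j(\tilde X^{\hat b}_{t_{j+1}}\mid \tilde X^{\hat b}_{t_j})}.
\end{equation}

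Next I would compute the left-hand side of \eqref{eq:expect:b:d} as a path-space integral. Let $q_k(x_0,\ldots,x_k) = \rho_0(x_0)\prod_{j=0}^{k-1}p^{\rightarrow}_j(x_{j+1}\mid x_j)$ be the joint density of the chain. Substituting the telescoped formula for $\tilde A^{\hat b}_{t_k}$ and using $\rho_0(x_0) = e^{-U_{t_0}(x_0)+F_0}$, the forward densities cancel and we are left with
\begin{equation}
e^{\tilde A^{\hat b}_{t_k}}\,q_k(x_0,\ldots,x_k) = e^{F_0 - U_{t_k}(x_k)}\prod_{j=0}^{k-1}p^{\leftarrow}_j(x_j\mid x_{j+1}).
\end{equation}
Now I integrate out $x_0,\ldots,x_{k-1}$ sequentially: each $p^{\leftarrow}_j(\cdot\mid x_{j+1})$ is a normalised Gaussian in its first argument, so the marginals evaluate to $1$ one after the other. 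What remains is $e^{F_0-U_{t_k}(x_k)} = e^{F_0-F_{t_k}}\rho_{t_k}(x_k)$, which proves both the partition-function identity (take $h\equiv 1$) and the ratio formula for general test functions~$h$.

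The main technical point, and the only thing that makes this discretisation exact rather than only $O(\mathit{\Delta}t)$-accurate, is the algebraic identity $\exp(R^{+}_{k}-R^{-}_{k}) = p^{\leftarrow}_k/p^{\rightarrow}_k$. This hinges on the two Gaussians sharing the same covariance $2\eps_{t_k}\mathit{\Delta}t_k\,\mathrm{Id}$ so that their normalising constants cancel; this is precisely why $R^{\pm}_{k}$ are defined as quadratic forms with the same prefactor and why the drift appears with opposite signs. I expect the rest of the argument to be a routine telescoping/Fubini computation with no convergence subtleties, since the chain has only $K$ steps and every integrand is a bounded Gaussian times a test function.
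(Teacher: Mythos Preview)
Your proposal is correct and follows essentially the same approach as the paper's proof: identify the forward and reverse Euler--Maruyama Gaussian kernels, telescope the $U$-terms, cancel the forward kernels against the path density, and then integrate out $x_0,\ldots,x_{k-1}$ sequentially using the normalisation of the reverse kernels. Your emphasis on the shared covariance (hence matching normalising constants) as the reason the identity is exact is precisely the key point, and your final expression $e^{F_0-F_{t_k}}\rho_{t_k}(x_k)$ is what the paper obtains as well (note the statement's $Z_{t_k}=\E[e^{\tilde A^{\hat b}_{t_k}}]$ should really read $Z_{t_k}/Z_0$, consistent with both your computation and the paper's own derivation).
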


Note that the weights in~\eqref{eq:expect:b:d} correct for the bias coming for both the time evolution of $U_t(x)$ and the fact that the Euler-Maruyama update in~\eqref{eq:Xb:d} does not satisfy the detailed-balance condition locally. It cab be checked by direct calculation that~\eqref{eq:Ab:d} is a consistent time-discretization of the ODE~\eqref{eq:sde:Ab}.

\begin{proof}
For simplicity of notations we will prove~\eqref{eq:expect:b:d} for $k=K$: the argument for all the other $k=1,\ldots, K-1$ is similar. 
The update rule in \eqref{eq:Ab:d} implies that
\begin{equation}
    \label{eq:Ab:sol}
    \begin{aligned}
    \tilde A^{\hat b }_{t_{K}} & = \sum_{k=0}^{K-1} \Big( U_{t_k}(\hat X^{\hat b}_{t_k}) - U_{t_{k+1}}(\hat X^{\hat b}_{t_{k+1}})+ R^+_{k}(\tilde X^{\hat b}_{t_k},\tilde X^{\hat b}_{t_{k+1}}) - R^-_{k}(\tilde X^{\hat b}_{t_{k+1}},\tilde X^{\hat b}_{t_k}) \Big)\\
    & = U_0(\tilde X^{\hat b}_{t_0}) - U_K(\tilde X^{\hat b}_{t_K}) + \sum_{k=0}^{K-1} \Big(R^+_{k}(\tilde X^{\hat b}_{t_k},\tilde X^{\hat b}_{t_{k+1}}) - R^-_{k}(\tilde X^{\hat b}_{t_{k+1}},\tilde X^{\hat b}_{t_k}) \Big),
    \end{aligned}
\end{equation}
Now, given the test function $h:\R^d \to \R$, consider
\begin{equation}
    \label{eq:Idef}
    \begin{aligned}
    I[h] &\equiv \E \big[ e^{\tilde A^{\hat b }_{t_{K}}} h(\tilde X^{\hat b} _{t_K})\big ]
    \end{aligned}
\end{equation}
Since the transition probability density function of the Euler-Maruyama update in~\eqref{eq:Xb:d} reads
\begin{equation}
    \label{eq:prob:trans:p}
    \rho^+_{t_k}(x_{k+1}|x_k) = (4\pi \eps_{t_k} {\mathit \Delta}t_k )^{-d/2} \exp\big(-R^+_{k}(x_k,x_{k+1})\big),
\end{equation}
the joint probability density function of the path $(\tilde X^{\hat b}_{t_0}, \tilde X^{\hat b}_{t_1}, \ldots, \tilde X^{\hat b}_{t_K})$ is given by
\begin{equation}
    \label{eq:pdfpath}
    \begin{aligned}
    \rho(x_0,\ldots, x_K) &= \exp\left(-U_0(x_0)+F_0\right) \prod_{k=0}^{K-1} \rho^+_{t_k,\mathit{\Delta}t_k}(x_{k+1}|x_k) \\
    & = C \exp\left(-U_0(x_0)+F_0-\sum_{k=0}^{K-1} R^+_{k}(x_k,x_{k+1}) \right) 
    \end{aligned}
\end{equation}
where $C= \prod_{k=0}^{K-1}(4\pi \eps_{t_k} {\mathit \Delta}t_k )^{-d/2} $.
We can use this density along with the explicit expression for $\tilde A^{\hat b}_{T_K}$ in~\eqref{eq:Ab:sol} to express the expectation~\eqref{eq:Idef} as an integral over $\rho(x_0,x_1,\ldots, x_K)$
\begin{equation}
    \label{eq:Idef:2}
    \begin{aligned}
    I[h] &= C \int_{\R^{d(K+1}} dx_0 \cdots dx_K \exp\left(-U_0(x_0)+F_0-\sum_{k=0}^{K-1} R^+_{k}(x_k,x_{k+1}) \right) \\
    & \times \exp\left( U_0(x_0) - U_K(x_K) + \sum_{k=0}^{K-1} \Big(R^+_{k}(x_{k}, x_{k+1}) - R^-_{k}(x_{k+1},x_k) \Big) \right)h(x_{K})
    \end{aligned}
\end{equation}
where the second exponential comes from the factor $e^{\tilde A^{\hat b}_K}$. \eqref{eq:Idef:2} simplifies into
\begin{equation}
    \label{eq:Idef:3}
    \begin{aligned}
    I[h] &= C \int_{\R^{d(K+1}} dx_0 \cdots dx_K \exp\left(-U_K(x_K)+F_0-\sum_{k=0}^{K-1} R^-_{k}(x_{k+1},x_k) \Big) \right)h(x_{K})
    \end{aligned}
\end{equation}
In this expression we recognize a product of factors involving 
\begin{equation}
    \label{eq:prob:trans:m}
    \rho^-_{t_k}(x_k|x_{k+1}) = (4\pi \eps_{t_k})^{-d/2} \exp\big(-R^-_{k}(x_{k+1},x_k)\big),
\end{equation}
which is the transition probability density function of the time-reversed update
\begin{align}
    \label{eq:Xb:d:rev}
    \tilde Y^{\hat b}_{t_{k}} &= \tilde Y^{\hat b}_{t_{k+1}} - \eps_{t_{k}} \nabla U_{t_{k}}(\tilde Y^{\hat b}_{t_{k+1}}) \Delta t_k - \hat b_{t_{k}}(\tilde Y^{\hat b}_{t_{k+1}}) \Delta t_k + \sqrt{2 \eps_{t_{k}}} ( W_{t_{k+1}}-W_{t_k}).
\end{align}
This implies in particular that we can perform the integrals in~\eqref{eq:Idef:3} sequentially over $x_0$, $x_1$, .., $x_{K-1}$ to be left with
\begin{equation}
    \label{eq:Idef:4}
    \begin{aligned}
    I[h] &= \int_{\R^{d}} \exp\left(-U_K(x_K)+F_0 \right)h(x_{K}) dx_K 
    \end{aligned}
\end{equation}
Therefore
\begin{equation}
    \label{eq:Idef:5}
    \begin{aligned}
    I[1] &= \int_{\R^{d}} \exp\left(-U_K(x_K)+F_0 \right) dx_K  = e^{-F_K+F_0},
    \end{aligned}
\end{equation}
which is the second equation in~\eqref{eq:expect:b:d}, and
\begin{equation}
    \label{eq:Idef:6}
    \begin{aligned}
    \frac{I[h]}{I[1]} &= e^{F_K-F_0}\int_{\R^{d}} \exp\left(-U_K(x_K)+F_0 \right) h(x_K) dx_K  = \int_{\R^d} h(x) \rho_{t_K}(x) dx
    \end{aligned}
\end{equation}
which is the first equation in~\eqref{eq:expect:b:d}.
\end{proof}

\subsection{Solving for the optimal drift via Feynman-Kac formula}
\label{app:fk}

Without loss of generality, we can always look for a solution to~\eqref{eq:b:2} in the form of $b_t(x) = \nabla \phi_t(x)$, so that this equation becomes the Poisson equation
\begin{equation}
    \label{eq:b:app}
    \Delta \phi_t - \nabla U_t \cdot \nabla \phi_t  = \partial_t U_t - \partial_t F_t.
\end{equation} 
The solution to this equation can be expressed via Feynman-Kac formula:

\begin{proposition}
    \label{prop:fk}
    Let $X^{t,x}_\tau$ satisfy the following SDE 
\begin{equation}
    \label{eq:Z}
    dX^{t,x}_\tau = - \nabla U_t(X^{t,x}_\tau) d\tau + \sqrt{2} dW_\tau, \qquad X^{t,x}_{\tau=0} = x
\end{equation}
where $U_t$ is evaluated fixed at $t\in[0,1]$ fixed. Assume geometric ergodicity of the semi-group associated with~\eqref{eq:Z}, i.e. the probability distribution of the solutions to this SDE converges exponentially fast towards their unique  equilibrium distribution with density~$\rho_t(x)$. Then for all $(t,x) \in [0,1]\times \R^d$ we have
\begin{equation}
    \label{eq:fk:phi}
    \phi_t(x)  = \int_0^\infty \E\big[ \partial_t F_t - \partial_t U_t(X^{t,x}_\tau)\big] d\tau 
\end{equation}
where the expectation is taken over the law of $X^{t,x}_\tau$.
\end{proposition}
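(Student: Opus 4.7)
The plan is to recognize that the Poisson equation~\eqref{eq:b:app} for $\phi_t$ is exactly $\mathcal{L}_t \phi_t = \partial_t U_t - \partial_t F_t$, where $\mathcal{L}_t = \Delta - \nabla U_t\cdot \nabla$ is the infinitesimal generator of the frozen-time overdamped Langevin SDE~\eqref{eq:Z}, and then to invert $\mathcal{L}_t$ by the usual ergodic-average representation of solutions to Poisson equations.

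First I would verify the solvability condition. The measure $\rho_t(x)\,dx$ is the invariant measure of~\eqref{eq:Z} (a direct check: its stationary Fokker--Planck equation at fixed $t$ reads $\nabla\cdot(\nabla U_t \rho_t + \nabla \rho_t)=0$, which holds since $\nabla \rho_t = -\nabla U_t \rho_t$). The identity~\eqref{eq:dtF} then gives
\begin{equation*}
\int_{\R^d} \bigl(\partial_t U_t(x) - \partial_t F_t\bigr)\rho_t(x)\,dx = 0,
\end{equation*}
so the right-hand side of~\eqref{eq:b:app} is centered with respect to the invariant measure, which is the necessary Fredholm solvability condition.

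Next I would apply Dynkin's formula to $\phi_t$ along $X^{t,x}_\tau$ at fixed $t$: for any finite $T>0$,
\begin{equation*}
\E\bigl[\phi_t(X^{t,x}_T)\bigr] - \phi_t(x) = \int_0^T \E\bigl[\mathcal{L}_t \phi_t(X^{t,x}_\tau)\bigr]\,d\tau = \int_0^T \E\bigl[\partial_t U_t(X^{t,x}_\tau) - \partial_t F_t\bigr]\,d\tau,
\end{equation*}
which rearranges to $\phi_t(x) = \E[\phi_t(X^{t,x}_T)] + \int_0^T \E[\partial_t F_t - \partial_t U_t(X^{t,x}_\tau)]\,d\tau$. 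Fixing the additive constant in $\phi_t$ by $\int_{\R^d}\phi_t(x)\rho_t(x)\,dx = 0$, geometric ergodicity of~\eqref{eq:Z} guarantees $\E[\phi_t(X^{t,x}_T)] \to 0$ exponentially fast as $T\to\infty$, and similarly the integrand $\E[\partial_t U_t(X^{t,x}_\tau)] - \partial_t F_t$ decays exponentially (again using ergodicity together with the centering from~\eqref{eq:dtF}), so the improper integral converges absolutely. Letting $T\to\infty$ yields~\eqref{eq:fk:phi}.

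The only delicate step is the passage to the limit $T\to\infty$: one must check that $\phi_t$ is sufficiently regular for Dynkin's formula to apply (standard under the twice-differentiability and growth assumptions on $U_t$ already made in the setup) and that geometric ergodicity is strong enough — in the sense of an exponential contraction in a weighted sup- or $L^2(\rho_t)$-norm — to dominate the potentially polynomial growth of $\phi_t$ and $\partial_t U_t$. That is the main obstacle, but it is a technical regularity question rather than a conceptual one, and is typically handled by imposing a Lyapunov/Poincaré condition on $U_t$, which is precisely what the proposition's hypothesis of geometric ergodicity is assumed to provide.
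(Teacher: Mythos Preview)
Your proposal is correct and follows essentially the same route as the paper: apply It\^o/Dynkin to $\phi_t(X^{t,x}_\tau)$ at fixed $t$, use the Poisson equation~\eqref{eq:b:app} to replace $\mathcal{L}_t\phi_t$ by $\partial_t U_t-\partial_t F_t$, integrate on $[0,T]$, take expectations, and send $T\to\infty$ using geometric ergodicity to dispose of $\E[\phi_t(X^{t,x}_T)]$ (as the additive gauge constant) and to guarantee convergence of the improper $\tau$-integral. The paper's version is slightly terser (it invokes It\^o directly and kills the martingale term by It\^o isometry rather than stating Dynkin), and it does not bother to verify the Fredholm solvability condition you check first, but the argument is the same.
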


\begin{proof}
    By Ito formula, 
\begin{equation}
    \label{eq:ito:fk}
    \begin{aligned}
    d \phi_t (X^{t,x}_\tau) & = \big( \Delta \phi_t (X^{t,x}_\tau) - \nabla U_t (X^{t,x}_\tau) \cdot \nabla \phi_t (X^{t,x}_\tau)\big) d\tau  + \sqrt{2} \nabla \phi_t (X^{t,x}_\tau) \cdot dW_\tau\\
    & =  \left(\partial_t U_t(X^{t,x}_\tau) - \partial_t F_t\right) d\tau + \sqrt{2} \nabla \phi_t (X^{t,x}_\tau) \cdot dW_\tau
    \end{aligned}
\end{equation}
where the differential is taken with respect to $\tau$ at $t$ fixed, and we used~\eqref{eq:b:app} to get the second equality. If we integrate this relation on $\tau\in[0,T]$ and take expectation, we deduce that 
\begin{equation}
    \label{eq:ito:fk:int}
    \begin{aligned}
    \E\big[\phi_t (X^{t,x}_T)\big]  - \phi_t(x) & = 
     \int_0^T \E\big[ \partial_t U_t(X^{t,x}_\tau) - \partial_t F_t\big]  d\tau 
    \end{aligned}
\end{equation}
where we use Ito isometry to zero the expectation of the martingale term involving $\sqrt{2} \nabla \phi_t (X^{t,x}_\tau) \cdot dW_\tau$.
If we let $T\to\infty$, by ergodicty the first term at the left hand side converges towards a constant independent of $(t,x)$ which we can neglect -- this fixes the gauge of the solution to~\eqref{eq:b:app} which is unique only up to a constant. What remains in this limit is the expression~\eqref{eq:fk:phi}. Note that the integral in this expression converges since $ \E\big[ \partial_t U_t(X^{t,x}_\tau) \big] \to \partial_t F_t $ exponentially fast as $\tau \to\infty$ by assumption of geometric ergodicity.
\end{proof}

\paragraph{Example: moving Gaussian distribution.} Let us consider the case where
\begin{equation}
    \label{eq:Ut:ex}
    U_t(x) = \tfrac12 (x-b_t )^T A_t (x-b_t),
\end{equation}
where $b_t \in \R^d$ is a time-dependent vector field and $A_t=A_t^T\in \R^d\times \R^d$ is a time-dependent positive-definite matrix: we assume that both $b_t$ and $A_t$ are $C^1$ in time, and also that $\dot A_t A_t = A_t \dot A_t$. The free energy in this example is
\begin{equation}
    \label{eq:fe:ex}
    F_t = - \log Z_t, \qquad Z_t = (2\pi)^{d/2} |\det A_t|^{-1/2},
\end{equation}
so that
\begin{equation}
    \label{eq:dtUdtF}
    \partial_t U_t(x) =  -\dot b_t^T A_t (x- b_t) + \tfrac12 (x-b_t )^T \dot A_t (x-b_t), \qquad 
    \partial_t F_t = \tfrac12 \text{tr}(A_t^{-1} \dot A_t).
\end{equation}
In this case, the SDE~\eqref{eq:Z} reads
\begin{equation}
    \label{eq:Z:2}
    dX^{t,x}_\tau = - A_t (X^{t,x}_\tau-b_t) d\tau + \sqrt{2} dW_\tau, \qquad X^{t,x}_{\tau=0} = x,
\end{equation}
and its solution is
\begin{equation}
    \label{eq:Z:3}
    X^{t,x}_\tau = e^{- A_t  \tau} x + \big(1-e^{- A_t  \tau} \big) b_t + \sqrt{2} \int_0^\tau  e^{- A_t  (\tau-\tau')}dW_{\tau'}.
\end{equation}
This implies that (using Ito isometry)
\begin{equation}
    \label{eq:UtX}
    \begin{aligned}
    \E\big[\partial_t U_t(X^{t,x}_\tau)\big] & = 
    - \dot b_t^T A_t e^{- A_t  \tau} (x-b_t) + \tfrac12 (x-b_t)^T e^{- A_t  \tau}  \dot A_t e^{- A_t  \tau} (x-b_t) \\    
    &\quad +  \int_0^\tau \text{tr}\big( e^{- A_t  \tau}  \dot A_t e^{- A_t  \tau} \big) d\tau\\
    & = 
    - \dot b_t^T A_t e^{- A_t  \tau} (x-b_t) + \tfrac12(x-b_t)^T e^{- A_t  \tau} \dot A_t e^{- A_t  \tau} (x-b_t) \\    
    &\quad  + \tfrac12 \text{tr}\big(A_t^{-1}  \dot A_t) - \tfrac12  \text{tr}(A_t^{-1} \dot A_t e^{- 2A_t  \tau}).
    \end{aligned}
\end{equation}
Therefore, from~\eqref{eq:fk:phi}, we have (using also~\eqref{eq:fe:ex})
\begin{equation}
    \label{eq:fk:phi:ex}
    \begin{aligned}
     \phi_t(x)  & = \int_0^\infty  \big(\dot b_t^T A_t e^{- A_t  \tau} (x-b_t) - \tfrac12(x-b_t)^T e^{- A_t  \tau} \dot A_t e^{- A_t  \tau} (x-b_t)\\& \qquad \qquad\qquad \qquad\qquad \qquad  + \tfrac12  \text{tr}(A_t^{-1} \dot A_t e^{- 2A_t  \tau})\big)d\tau\\
     & = \dot b_t\cdot (x-b_t) - \tfrac14 (x-b_t)^T   \dot A_t A_t^{-1}  (x-b_t) + \tfrac14  \text{tr}(A_t^{-1} \dot A_t A_t^{-1} ).
    \end{aligned} 
\end{equation}
This solution checks out since it implies that
\begin{equation}
    \label{eq:check:ex:1}
    -\nabla U_t(x) \cdot \nabla \phi_t(x) + \Delta \phi_t(x) = 
    -\dot b_t ^T A_t (x-b_t) +\tfrac12 (x-b_t)^T \dot A_t (x-b_t)  - \tfrac12 \text{tr} (A^{-1}_t \dot A_t ),
\end{equation}
which is  $\partial_t U_t(x) - \partial_t F_t$ as it should.

\subsection{Hutchinson's trace estimator for the evaluation of $\nabla_t \hat b_t(x)$}
\label{sec:hutch}

It is well-known that, if $\nabla \nabla b_t(x)$ is bounded,
\begin{equation}
    \label{eq:hutch}
\nabla \cdot \hat b_t(x) = \frac1{2\delta} \mathbb{E}  \big[ \eta \cdot\big(\hat b_t(x+\delta \eta )- \hat b_t(x-\delta \eta ) \big)\big] + O(\delta^2),  
\end{equation}
where $0<\delta \ll1$ is an adjustable  parameter and $\eta \sim N(0,\text{Id})$. Indeed we have
\begin{equation}
    \label{eq:hutch:1}
    \frac1{2\delta} \eta \cdot\big(\hat b_t(x+\delta \eta )- \hat b_t(x-\delta \eta ) \big) =  \eta^T \nabla b_t(x) \eta + O(\delta^2),
\end{equation}
which implies \eqref{eq:hutch} after taking the expectation over~$\eta$.

We can use this formula to estimate the PINN loss via
\begin{equation}
        \label{eq:loss:b:F:E:h}
        L^{T,\delta} _{\text{PINN}} [\hat b,\hat F] = \int_0^T  \E \Big[ R^\delta_t(x_t,\eta)  R^\delta_t(x_t,\eta') \Big] dt
 \end{equation}
where the expectation is now taken independent over $x_t \sim \hat \rho_t$, $\eta \sim N(0,\text{Id})$, and $\eta' \sim N(0,\text{Id})$, and we defined
\begin{equation}
\label{eq:loss:b:F:E:h:R}
R^\delta_t(x,\eta) = \frac1{2\delta} \eta \cdot\big(\hat b_t(x+\delta \eta )- \hat b_t(x-\delta \eta ) \big) - \nabla U_t(x) \cdot \hat b_t(x)  - \partial_t U_t(x) + \partial_t \hat F_t
\end{equation}
The expectation in~\eqref{eq:loss:b:F:E:h} is unbiased since $\eta\perp \eta'$, and its accuracy can be controlled by lowering $\delta$.

\subsection{Details on Numerical Experiments}
\label{app:num}

In the following we include details for reproducing the experiments presented in \Cref{sec:experiments}. An overview of the training procedure is given in \Cref{alg:train}. Note that the SDE for the weights can replaced with \eqref{eq:A:new} when learning with $\hat \phi_t$, as one would do with the action matching loss \eqref{eq:loss:phi}.

\subsubsection{Performance metrics}
\label{app:perf}

\paragraph{Effective sample size.} We can compute the self-normalized ESS as \begin{align}
    \operatorname{ESS}_t=\frac{\left(N^{-1} \sum_{i=1}^N \exp \left(A_t^i\right)\right)^2}{N^{-1} \sum_{i=1}^N \exp \left(2 A_t^i\right)}
\end{align}
at time $t$ along the SDE trajectory. We can use the ESS both as a quality metric and as a trigger for when to perform resampling of the walkers based on the weights, using, e.g. systematic resampling \citep{doucet2001, bolic2004}. Systematic resampling is one of many resampling techniques from particle filtering wherein some walkers are killed and some are duplicated based on their importance weights.

\paragraph{2-Wasserstein distance.} The 2-Wasserstein distance reported in \Cref{tab:gmm:eval} were computed with 2000 samples from the model and the target density using the Python Optimal Transport library. 

\paragraph{Maximum Mean Discrepancy (MMD).} We use the MMD code from \cite{blessing2024} to benchmark the performance of NETS on Neal's funnel. We use the definition of the MMD as 
\begin{equation}
    \operatorname{MMD}^2\left(\hat \rho, \rho \right) \approx \frac{1}{n(n-1)} \sum_{i, j}^n k\left(\hat x_i, \hat x_j\right)+\frac{1}{m(m-1)} \sum_{i, j}^m k\left(x_i , x_j\right)-\frac{2}{n m} \sum_i^n \sum_j^m k\left(\hat x_i, x_j\right)
\end{equation}
where $\hat x \sim \hat \rho$ is from the model distribution and $x \sim \rho$ is from the target and $k: \mathbb R^d \times \mathbb R^d \rightarrow R$ is chosen to be the radial basis kernel with unit bandwidth.

\subsubsection{40-mode GMM}

The 40-mode GMM is defined with the mean vectors given as:
\begin{align*}
\mu_{1} &= \left( -0.2995, \; 21.4577 \right), &
\mu_{2} &= \left( -32.9218, \; -29.4376 \right), \\
\mu_{3} &= \left( -15.4062, \; 10.7263 \right), &
\mu_{4} &= \left( -0.7925, \; 31.7156 \right), \\
\mu_{5} &= \left( -3.5498, \; 10.5845 \right), &
\mu_{6} &= \left( -12.0885, \; -7.8626 \right), \\
\mu_{7} &= \left( -38.2139, \; -26.4913 \right), &
\mu_{8} &= \left( -16.4889, \; 1.4817 \right), \\
\mu_{9} &= \left( 15.8134, \; 24.0009 \right), &
\mu_{10} &= \left( -27.1176, \; -17.4185 \right), \\
\mu_{11} &= \left( 14.5287, \; 33.2155 \right), &
\mu_{12} &= \left( -8.2320, \; 29.9325 \right), \\
\mu_{13} &= \left( -6.4473, \; 4.2326 \right), &
\mu_{14} &= \left( 36.2190, \; -37.1068 \right), \\
\mu_{15} &= \left( -25.1815, \; -10.1266 \right), &
\mu_{16} &= \left( -15.5920, \; 34.5600 \right), \\
\mu_{17} &= \left( -25.9272, \; -18.4133 \right), &
\mu_{18} &= \left( -27.9456, \; -37.4624 \right), \\
\mu_{19} &= \left( -23.3496, \; 34.3839 \right), &
\mu_{20} &= \left( 17.8487, \; 19.3869 \right), \\
\mu_{21} &= \left( 2.1037, \; -20.5073 \right), &
\mu_{22} &= \left( 6.7674, \; -37.3478 \right), \\
\mu_{23} &= \left( -28.9026, \; -20.6212 \right), &
\mu_{24} &= \left( 25.2375, \; 23.4529 \right), \\
\mu_{25} &= \left( -17.7398, \; -1.4433 \right), &
\mu_{26} &= \left( 25.5824, \; 39.7653 \right), \\
\mu_{27} &= \left( 15.8753, \; 5.4037 \right), &
\mu_{28} &= \left( 26.8195, \; -23.5521 \right), \\
\mu_{29} &= \left( 7.4538, \; -31.0122 \right), &
\mu_{30} &= \left( -27.7234, \; -20.6633 \right), \\
\mu_{31} &= \left( 18.0989, \; 16.0864 \right), &
\mu_{32} &= \left( -23.6941, \; 12.0843 \right), \\
\mu_{33} &= \left( 21.9589, \; -5.0487 \right), &
\mu_{34} &= \left( 1.5273, \; 9.2682 \right), \\
\mu_{35} &= \left( 24.8151, \; 38.4078 \right), &
\mu_{36} &= \left( -30.8249, \; -14.6588 \right), \\
\mu_{37} &= \left( 15.7204, \; 33.1420 \right), &
\mu_{38} &= \left( 34.8083, \; 35.2943 \right), \\
\mu_{39} &= \left( 7.9606, \; -34.7833 \right), &
\mu_{40} &= \left( 3.6797, \; -25.0242 \right)
\end{align*}
These means follow the definition given in the FAB \citep{midgley2023flow} code base that has been subsequently used in recent papers.  The time-dependent potential $U_t(x)$ is given by the linear interpolation of means in the mixture. 

\subsubsection{Neal's 10-$d$ funnel}
The Neal's Funnel distribution is a 10-$d$ probability distribution defined as 
\begin{equation}
    \label{eq:funnel:dist}
    x_0 \sim \mathsf N(0, \sigma^2), \quad x_{1:9} \sim \mathsf N(0, e^{x_0})
\end{equation}
where $\sigma = 3$ and we use subscripts here as a dimensional index and not as a time index like in the rest of the paper. Following this, we use as a definition of the interpolating potential:
\begin{equation}
    \label{eq:funnel:Ut}
    U_t(x) = \frac{1}{2}x_0^2 (1 - t + \frac{t}{\sigma^2}) + \frac{1}{2}\sum_{i=1}^{d-1} e^{-t x_0} x_i^2 + (d-1)t x_0 
\end{equation}
so that at $t=0$, we have $U_0(x) = \frac{1}{2}x_0^2  + \frac{1}{2}\sum_{i=1}^{d-1} x_i^2$ and at time $t=1$ we have the funnel potential given as $U_1(x) = \frac{1}{2\sigma^2} x_0 + \frac{1}{2}\sum_{i=1}^{d-1} e^{-x_0} x_i^2  + (d-1)x_0$. 

\subsubsection{50-$d$ Mixture of Student-T distributions}

Following \cite{blessing2024}, we use their mixture of 10 student-T distributions in 50 dimensions. We construct $U_t$ via interpolation of means from a single standard student-T distribution (mean 0). We use the same neural network as used in the GMM experiments. 

To further drive home the fact that our annealed Langevin dynamics with transport can be taken post-training to the $\epsilon \to \infty$ limit to approach perfect sampling, we provide the following ablation from our model learned with the action matching loss given in Figure \ref{fig:w2}.

\begin{figure}
    \centering
    \includegraphics[width=0.5\linewidth]{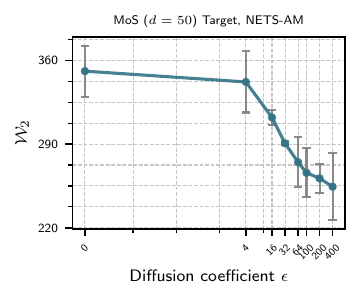}
    \caption{Reduction in $\mathcal W_2$ distance from taking the $\epsilon \to \infty$ limit in sampling with NETS. Note that the resolution of the SDE integration must increase to accommodate the higher stochasticity. Average taken over 3 sampling runs of 2000 walkers each.}
    \label{fig:w2}
\end{figure}

\subsection{Details of the $\varphi^4$ model}
\label{app:phi4}

We consider the Euclidean scalar $\phi^4$ theory given by the action 
\begin{align}
    S_{\text{Euc}}[\varphi] =  \int \, \big[ \partial_\mu \varphi(x) \partial^\mu \varphi(x) + m^2 \varphi^2(x) + \lambda \varphi^4(x)\big] \,  d^D x 
\end{align}
where we use Einstein summation to denote the dot product with respect to the Euclidean metric and $D$ is the spacetime dimension. We are interested in acquiring a variant of this expression that provides a fast computational realization when put onto the lattice. Using Green's identity (integrating by parts) we note that 
\begin{align}
    \int  (\partial_\mu \varphi(x) \partial^\mu \varphi(x) d^d x= \int  \,\partial_\mu \varphi \cdot \partial_\mu \varphi \, d^d x  = - \int  \varphi(x) \partial_\mu \partial^\mu \varphi(x) d^dx \, + \text{vanishing surface term}
\end{align}
so that 
\begin{align}
    S_{\text{Euc}}[\varphi] =   \int  -\varphi(x) \partial_\mu  \partial^\mu \varphi(x) + m^2 \varphi^2(x) + \lambda \varphi^4(x) \,  d^d x.
\end{align}
Discretizing $S_{\text{Euc}}$ onto the lattice
\[
\Lambda = \{a (n_0, \dots, n_{d-1}) \mid n_i \in \{0, 1, 2, \dots, L\}, \, i = 0, 1, \dots, d, a \in \mathbb R_+\},
\]
where $a$ is the lattice spacing used to define the physical point $x = an$, we use the forward difference operator to define 
\begin{align}
    \partial_\mu \varphi(x) \rightarrow \tfrac{1}{a}[\varphi(x + \mu) - \varphi(x)] \qquad \partial_\mu \partial^\mu \varphi(x) \rightarrow \tfrac{1}{a^2} [ \varphi(x + \mu) - 2 \varphi(x)  + \varphi(x-\mu)]. 
\end{align}
Using these expressions, we write the discretized lattice action as 
\begin{align}
    S_{\text{Lat}} &=  \sum_{x \in \Lambda} a^D \left[ \sum_{\mu=1}^{D} -\tfrac{1}{a^2}\big[ \varphi_{x+\mu}\varphi_x - 2 \varphi_x^2 + \varphi_{x-\mu}\varphi_x]  + m^2 \varphi_x^2 + \lambda \varphi_x^4 \right] \\
    &=   \sum_{x} a^{D} \left[ 2D a^{-2} \varphi_x^2  - a^{-2}\sum_{\mu} \big[ \varphi_{x+\mu}\varphi_x + \varphi_{x-\mu}\varphi_x]  + m^2 \varphi_x^2 + \lambda \varphi_x^4 \right] \\
    &= \sum_{x} a^d \left[2 D a^{-2} \varphi_x^2 - 2 a^{-2}\sum_{\mu} [\varphi_x \varphi_{x+\mu}] + m^2 \varphi_x^2 + \lambda \varphi_x^4   \right]\\ 
    &= \sum_x a^D \left [ -2 a^{-2}\sum_{\mu} \varphi_x \varphi_{x + \mu}  + (2 a^{-2} D + m^2) \varphi_x^2 + \lambda \varphi_x^4 \right]
    \label{eq:S:lat}
\end{align}
where we have used the fact that on the lattice $\sum_x \varphi_x \varphi_{x+\hat\mu} = \sum_x \varphi_{x-\hat\mu}\varphi_x$ to get the third equality.
It is useful to put the action in a form that is independent of the lattice spacing $a$. To do so, we introduce the re-scaled lattice field as
\begin{align}
    \varphi_x \rightarrow a^{D/2 -1} \varphi_x, \quad  m^2 \rightarrow a^2 m^2, \quad \text{and} \quad  \lambda \rightarrow a^{4 - D} \lambda.
\end{align}
Plugging these rescalings into \eqref{eq:S:lat} gives us the final expression
\begin{align}
\label{eq:S:comp}
    S_{\text{Lat}} = \sum_x \left[- 2 \sum_{\mu}\varphi_x \varphi_{x + \mu} \right]  + (2D +  m^2) \varphi_x^2 +  \lambda  \varphi_x^4, 
\end{align}
which we are to use in simulation. 

\subsubsection{Free theory $\lambda = 0$} Turning off the interaction makes it possible to analytically solve the theory. To do this, introduce the discrete Fourier transform relations
\begin{align}
    \varphi_k &= \frac{1}{\sqrt{L^D}}\sum_x \varphi_x e^{-ik \cdot x} \\
    \label{eq:phix:fourier}
    \varphi_x &= \frac{1}{\sqrt{L^D}} \sum_k \varphi_k e^{ik\cdot x}
\end{align}
for discrete wavenumbers $k = \frac{2l\pi}{L}$ with $l = 0, \cdots, L-1$. Plugging in \eqref{eq:phix:fourier} into the first part of \eqref{eq:S:lat}, we get the expanded sum
\begin{align}
  \sum_x \left[- 2 \sum_{\mu} \hat\varphi_x \hat \varphi_{x + \mu} \right] &\rightarrow -\frac{2}{L^{d}} \sum_x \sum_\mu \sum_k \sum_{k'} \varphi_k \varphi_{k'} e^{i(k + k')\cdot x} e^{ik' \cdot \mu} \\
  &= - 2\sum_\mu \sum_k \sum_{k'} \delta_{k, -k'} \varphi_k \varphi_{k'} e^{i k'\cdot \mu} \\
  &= - 2\sum_\mu \sum_k \varphi_k \varphi_{-k} e^{-ik_\mu} \\
  &= - 2\sum_\mu \sum_k \varphi_k \varphi_{k^*} e^{-ik_\mu} \\
  &= - 2\sum_\mu \sum_k |\varphi_k|^2 [\cos{k_\mu} + \cancel{i\sin{k_\mu}}] = - \sum_\mu \sum_k |\varphi_k|^2 \cos{k_\mu}
\end{align}
where $\phi^*$ indicates conjugation, and we got the first equality by the orthogonality of the Fourier modes, the second by the Kronecker delta, and the third by the reality of the scalar field. Proceeding similarly for the terms proportional to $\varphi^2$ gives us the expression
\begin{align}
    S_k =  \sum_k \left[ m^2 + 2D - 2\sum_\mu \cos {k_\mu} \right] |\varphi|^2 
\end{align}
The above equation can be written in quadratic form to highlight that the field may be sampled analytically
\begin{align}
    S_k &= \frac{1}{L^d} \sum_k \varphi_k M_{k,-k} \varphi_{-k} \\
    &\text{where} \quad M_{k,-k} = \left[ m^2 + 2D - 2\sum_\mu \cos {k_\mu}  \right] \delta_{k,-k}
\end{align}
Note that this free theory can be sampled for any $m^2 > 0$. 

\subsubsection{$\varphi^4$ numerical details}

We numerically realize the above lattice theory in D=2 spacetime dimensions. We use an interpolating potential with time dependent $m^2_t = (1-t)m^2_0 + t m^2_1$, $\lambda_t = (1-t) \lambda_0 + t \lambda_1$ where $\lambda_0$ is always chosen to be 0 (though we note that you could run this sampler for any $U_0$ that you could sample from easily, not just analytically but also with existing MCMC methods). For the $L=20$ (d = $L\times L = 400$ dimensional) experiments, we identify the critical point of the theory (where the lattices go from ordered to disordered) using HMC by studying the distribution of the magnetization of the field configurations as $M[\varphi^i(x)]  = \sum_x \varphi^i(x)$, where summation is taken over all lattice sites on the $i^{th}$ lattice configuration. We identify this at $m^2_1 = -1.0$, $\lambda_1 = 0.9$ and use these as the target theory parameters on which to perform the sampling. For the $L=16$ test ($d=256$), we go past this phase transition into the ordered phase of the theory, which we identify via HMC simulations at $m^2_1 = -1.0$, $\lambda_1 = 0.8$.

\subsection{Link with \cite{vargas2024transport}}
\label{sec:vargas}

Consider the process $Y_t^{\hat b}$ solution to the SDE
\begin{equation}
    \label{eq:sdeX:b}
    d Y_t^{\hat b} = \eps_t \nabla U_t(Y_t^{\hat b} ) dt + 2 \eps_t \nabla \log \rho_t^{\hat b}( Y_t^{\hat b}) dt + \hat b_t(Y_t^{\hat b} ) dt + \sqrt{2\eps_t}dW_t, \quad Y_0^{\hat b} \sim \rho_0.
\end{equation}
where  $\rho_t^{\hat b}$ denotes the PDF of the process $X^{\hat b}_t$ defined by the SDE~\eqref{eq:sde:Xb}, i.e. the solution to the FPE 
\begin{equation}
    \label{eq:pdf:Xb}
    \partial_t \rho^{\hat b}_t = \eps_t \nabla \cdot( \nabla U_t \rho^{\hat b}_t+ \nabla \rho^{\hat b}_t) - \nabla \cdot(\hat b_t \rho^{\hat b}_t), \qquad \rho^{\hat b}_{0} = \rho_0
\end{equation}
The process $Y^{\hat b}_t$ has a simple interpretation: it is the time-reversed of the process run using the time-reversed potential $U_{1-t}$ and $-\hat b_{1-t}$: that is, if the additional drift $\hat b_t$ was the perfect one  solution to~\eqref{eq:b:2}, the law of $X^{\hat b} = (X_t^{\hat b})_{t\in[0,1]}$ and $Y^{b} = (Y_t^{\hat b})_{t\in[0,1]}$ should coincide. This suggests to learn $b$ using as objective  a divergence of the path measure of $X^{\hat b}$ from that of $Y^{\hat b}$. This is essentially what is suggested in~\cite{vargas2024transport}, and for the reader convenience let us re-derive  some of their results in our notations.

The Kullback-Leibler divergence (or relative entropy) of the path measure of $X^{\hat b}$ from that of $Y^{\hat b}$ reads
\begin{equation}
    \label{eq:KL:path}
    \begin{aligned}
    \text{KL} (X^{\hat b}\|Y^{\hat b}) & = \frac14 \eps_t \int_0^1 \E \big[ |\nabla U_t(X^{\hat b}_t) + \nabla \log \rho^{\hat b}_t(X^{\hat b}_t)|^2\big] dt 
\end{aligned}
\end{equation}
This objective is akin to the one used in score-based diffusion modeling (SBDM) and simply says that one way to adjust $\hat b$ is by matching the score of $\rho^{\hat b} _t $ to that of $\rho_t$. As written \eqref{eq:KL:path}  is not explicit since we do not know 
$ \nabla \log \rho^{\hat b}_t$. We can however make it explicit after a few manipulations similar to those used in SBDM. To this end, notice first that, by Ito formula, we have
\begin{equation}
    \label{eq:s1}
    \begin{aligned}
    d \log \rho^{\hat b}_t(X^{\hat b}_t) &= \nabla \log \rho^{\hat b}_t(X^{\hat b}_t) \cdot (-\eps_t \nabla U_t(X^{\hat b}_t) + \hat b_t(X^{\hat b}_t)) dt + \eps_t \Delta \log \rho^{\hat b}_t(X^{\hat b}_t)dt\\
    & \quad + \sqrt{2\eps_t} \nabla \log \rho^{\hat b}_t(X^{\hat b}_t) \cdot dW_t
    \end{aligned}
\end{equation}
which implies that
\begin{equation}
    \label{eq:s2}
    \begin{aligned}
    \eps_t \nabla \log \rho^{\hat b}_t(X^{\hat b}_t) \cdot \nabla U_t((X^{\hat b}_t) dt & = -d \log \rho^{\hat b}_t(X^{\hat b}_t) + \nabla \log \rho^{\hat b}_t(X^{\hat b}_t) \cdot \hat b_t(X^{\hat b}_t) dt \\
    & \quad + \eps_t \Delta \log \rho^{\hat b}_t(X^{\hat b}_t) dt + \sqrt{2\eps_t} \nabla \log \rho^{\hat b}_t(X^{\hat b}_t) \cdot dW_t
    \end{aligned}
\end{equation}
Inserting this expression in \eqref{eq:KL:path} after expanding the square, and noticing that the martingale term involving $dW_t$ disappears by Ito isometry  and that the term $d \log \rho^{\hat b}_t(X^{\hat b}_t)$ can be integrated in time we arrive at
\begin{equation}
    \label{eq:KL:path:2}
    \begin{aligned}
    \text{KL} (X^{\hat b}\|Y^{\hat b})
    & =  \frac14 \eps_t \int_0^1  \E \big[  |\nabla U_t(X^{\hat b}_t)|^2 + |\nabla \log \rho^{\hat b}_t(X^{\hat b}_t)|^2 + 2 \nabla U_t(X^{\hat b}_t)\cdot \nabla \log \rho^{\hat b}_t(X^{\hat b}_t)\big] dt\\
    &=   \frac14  \int_0^1  \E \big[ \eps_t |\nabla U_t(X^{\hat b}_t)|^2 + \eps_t |\nabla \log \rho^{\hat b}_t(X^{\hat b}_t)|^2+ \eps_t \nabla U_t(X^{\hat b}_t)\cdot \nabla \log \rho^{\hat b}_t(X^{\hat b}_t) \big] dt\\
    & \quad + \frac14  \int_0^1  \E \big[  \nabla \log \rho^{\hat b}_t(X^{\hat b}_t) \cdot \hat b_t(X^{\hat b}_t) + \eps_t \Delta \log \rho^{\hat b}_t(X^{\hat b}_t)\big] dt\\
    & \quad +\frac14 \E[\log \rho_0(X_0)] - \frac14\E[\log \rho^{\hat b}_1(X_1)] 
\end{aligned}
\end{equation}
where we used $\rho^{\hat b}_0 = \rho_0$. We can now use the following identities, each obtained using $\rho^{\hat b}_t\nabla \log \rho^{\hat b}_t  = \nabla \rho^{\hat b}_t$ and one integration by parts:
\begin{equation}
    \label{eq:s3}
    \begin{aligned}
         \E \big[ |\nabla \log \rho^{\hat b}_t(X^{\hat b}_t)|^2 \big] & = \int_{\R^d} |\nabla \log \rho^{\hat b}_t(x)|^2 \rho_t^{\hat b}(x) dx\\
         & = \int_{\R^d} \nabla \log \rho^{\hat b}_t(x)\cdot \nabla \rho_t^{\hat b}(x) dx\\
         & = - \int_{\R^d} \Delta  \log \rho^{\hat b}_t(x)\rho_t^{\hat b}(x) dx\\
         & = - \E \big[ \Delta \log \rho^{\hat b}_t(X^{\hat b}_t)\big],
    \end{aligned}
\end{equation}
\begin{equation}
    \label{eq:s4}
    \begin{aligned}
         \E \big[ \nabla U_t(X^{\hat b}_t) \cdot \nabla \log \rho^{\hat b}_t(X^{\hat b}_t)  \big] & = \int_{\R^d} \nabla U_t(x) \cdot  \nabla \log \rho^{\hat b}_t(x)  \rho_t^{\hat b}(x) dx\\
         & = \int_{\R^d} \nabla U_t(x) \cdot  \nabla  \rho_t^{\hat b}(x) dx\\
         & = - \int_{\R^d} \Delta U_t(x) \rho_t^{\hat b}(x) dx\\
         & = - \E \big[ \Delta U_t(X^{\hat b}_t)\big]
    \end{aligned}
\end{equation}
and
\begin{equation}
    \label{eq:s5}
    \begin{aligned}
         \E \big[ \nabla \log \rho^{\hat b}_t(X^{\hat b}_t) \cdot \hat b_t(X^{\hat b}_t)  \big] & = \int_{\R^d} \nabla \log \rho^{\hat b}_t(x) \cdot \hat b_t(x)  \rho_t^{\hat b}(x) dx\\
         & = - \int_{\R^d} \nabla \cdot \hat b_t(x) \rho_t^{\hat b}(x) dx\\
         & = - \E \big[ \nabla \cdot \hat b_t(X^{\hat b}_t)\big]
    \end{aligned}
\end{equation}
Inserting these expressions in~\eqref{eq:KL:path:2}, it reduces to
\begin{equation}
    \label{eq:loss:vargas}
    \begin{aligned}
    \text{KL} (X^{\hat b}\|Y^{\hat b}) &=   \frac14  \int_0^1  \E \big[ \eps_t |\nabla U_t(X^{\hat b}_t)|^2 - \eps_t \Delta U_t(X^{\hat b}_t)- \nabla \cdot b_t (X^{\hat b}_t) \big] dt\\
    &\quad + \frac14 \E[\log \rho_0(X_0)] - \frac14\E[\log \rho^{\hat b}_1(X_1)] \\
\end{aligned}
\end{equation}
This objective is still not practical because it involves $\log \rho_1^{\hat b}$, which is unknown. There is however a simple way to fix this, by adding a term in the Kullback-Leibler divergence~\eqref{eq:KL:path} 
\begin{equation}
    \label{eq:KL:path:corr}
    \begin{aligned}
    \text{KL}' (X^{\hat b}\|Y^{\hat b}) & = \text{KL} (X^{\hat b}\|Y^{\hat b}) + \frac14\E[\log(\rho_1^{\hat b}(X^{\hat b}_1)/\rho_1(X^{\hat b}_1) ])
\end{aligned}
\end{equation}
This additional term is proportional to the Kullback-Leibler divergence of $\rho_1^{\hat b}$ from the target PDF $\rho_1$. Using~\eqref{eq:loss:vargas} as well as $\rho_1(x) = e^{-U_1(x)+F_1}$, we can now express~\eqref{eq:KL:path:corr} as
\begin{equation}
    \label{eq:KL:path:corr:2}
    \begin{aligned}
    \text{KL}' (X^{\hat b}\|Y^{\hat b}) 
    &= \frac14  \int_0^1  \E \big[ \eps_t |\nabla U_t(X^{\hat b}_t)|^2 - \eps_t \Delta U_t(X^{\hat b}_t)- \nabla \cdot b_t (X^{\hat b}_t) \big] dt\\
    &\quad + \frac14 \E[\log \rho_0(X_0)] + \frac14\E[U_1(X_1)] -\frac14 F_1.
\end{aligned}
\end{equation}
This is Equation~(24) in~\cite{vargas2024transport} in which we set $\nabla \hat \phi_t(x) = \hat b_t(x) $ and we used that, for any $c_t: \R^d \to\R^d$, we have
\begin{equation}
    \label{eq:itorev}
    \E \int_0^1 c_t(X^{\hat b}_t) \cdot \overleftarrow{d} W_t  = \sqrt{2\eps_t} \int_0^1 \E\big[\nabla \cdot c_t(X^{\hat b}_t)\big] dt.
\end{equation}
Note that we can neglect the term $\frac14 \E[\log \rho_0(X_0)]$  in \eqref{eq:loss:vargas} since it does not depend on $\hat b$, so that  the minimization of \eqref{eq:loss:vargas} can be cast into the minimization of (after multiplication by 4)
\begin{equation}
    \label{eq:oss:vargas:2}
    \begin{aligned}
      & \int_0^1  \E \big[ \eps_t |\nabla U_t(X^{\hat b}_t)|^2 - \eps_t \Delta U_t(X^{\hat b}_t)- \nabla \cdot b_t (X^{\hat b}_t) \big]+ \E[U_1(X_1) - F_1] \\
      & =\int_0^1  \int_{\R^d} \big[ \eps_t |\nabla U_t(x)|^2 - \eps_t \Delta U_t(x)- \nabla \cdot b_t (x)) \big] \rho^{\hat b}_t(x) dx + \int_{\R^d} (U_1(x) -F_1)\rho_1^{\hat b} (x) dx
\end{aligned}
\end{equation}
where $\rho^{\hat b}_t$ solves~\eqref{eq:pdf:Xb}.

Let us check that the minimizer of~\eqref{eq:oss:vargas:2} is $\hat b_t = b_t$, the solution to~\eqref{eq:b:2}, so that we also have  $\rho^{\hat b}_t = \rho^b_t = \rho_t$. To this end, notice that the minimization of~\eqref{eq:oss:vargas:2}
can be performed with the method Lagrange multiplier, using the extended objective 
\begin{equation}
    \label{eq:oss:vargas:3}
    \begin{aligned}
        & \int_0^1 \int_{\R^d} \big[\eps_t |\nabla U_t(x)|^2 - \eps_t \Delta U_t(x) - \nabla \cdot \hat b_t(x) \big ] \rho^{\hat b}_t(x) dx dt + \int_{\R^d} (U_1(x) -F_1)\rho^{\hat b}_1(x) dx\\
        & + \int_0^1 \int_{\R^d} \lambda_t(x) \big( \partial_t \rho^{\hat b}_t - \eps_t \nabla \cdot( \nabla U_t \rho^{\hat b}_t+ \nabla \rho^{\hat b}_t) + \nabla \cdot(\hat b_t \rho^{\hat b}_t)\big) dx dt 
    \end{aligned}
\end{equation}
where $\lambda_t(x)$ is the Lagrange multiplier to be determined. Taking the first variation of this objective over $\lambda$, $\rho^{\hat b}$, and $\hat b$, we arrive at the Euler-Lagrange equations
\begin{equation}
    \label{eq:oss:vargas:EL}
    \begin{aligned}
        0&=\partial_t \rho^{\hat b}_t - \eps_t \nabla \cdot( \nabla U_t \rho^{\hat b}_t+ \nabla \rho^{\hat b}_t) + \nabla \cdot(\hat b_t \rho^{\hat b}_t), \qquad && \rho^{\hat b}_{0} = \rho_0,\\
        0 & =  \eps_t |\nabla U_t|^2 - \eps_t \Delta U_t - \nabla \cdot \hat b_t &&\\
        & \quad - \partial_t \lambda_t + \eps_t  \nabla U_t \cdot \nabla \lambda_t - \eps_t \Delta \lambda_t  -  \hat b_t \cdot \nabla \lambda_t&&\lambda_1 = -U_1+F_1\\
        0 & = \nabla \rho^{\hat b}_t- \rho^{\hat b}_t \nabla \lambda_t &&
    \end{aligned}
\end{equation}
We can check that $\hat b _t(x)= b_t(x)$, $\rho^{\hat b}_t(x) = \rho_t(x) = e^{-U_t(x)+F_t}$, and $\lambda_t(x) = -U_t(x)+F_t$ is a solution: indeed this solves the first and the last equations in~\eqref{eq:oss:vargas:EL} and reduces the second to
\begin{equation}
    \label{eq:oss:vargas:EL:2}
    \begin{aligned}
        0 & =  \big[\eps_t |\nabla U_t|^2 - \eps_t \Delta U_t - \nabla \cdot  b_t\big ] &&\\
        & \quad + \partial_t U_t - \partial_tF_t - \eps_t  |\nabla U_t |^2 + \Delta U_t  + \nabla b_t \cdot \nabla U_t\\
        & = - \nabla \cdot  b_t + \partial_t U_t - \partial_tF_t + \nabla b_t \cdot \nabla U_t
    \end{aligned}
\end{equation}
which is satisfied since $b_t$ solves~\eqref{eq:b:2}.

\end{document}